\documentclass{article}

\usepackage{arxiv}
\usepackage[utf8]{inputenc} 
\usepackage[T1]{fontenc}    
\usepackage{hyperref}       
\usepackage{url}            
\usepackage{booktabs}       
\usepackage{amsfonts}       
\usepackage{nicefrac}       
\usepackage{microtype}      
\usepackage{lipsum}
\usepackage{graphicx}
\graphicspath{ {./images/} }

\usepackage{amsmath,amsfonts,bm}

\def\eqref#1{equation~\ref{#1}}

\def\1{\bm{1}}

\DeclareMathAlphabet{\mathsfit}{\encodingdefault}{\sfdefault}{m}{sl}
\SetMathAlphabet{\mathsfit}{bold}{\encodingdefault}{\sfdefault}{bx}{n}

\usepackage{natbib}
\usepackage{graphicx}
\usepackage{booktabs}
\usepackage{multirow}
\usepackage{amsmath,amssymb,amsfonts}
\usepackage{amsthm}
\usepackage{mathrsfs}
\usepackage[title]{appendix}
\usepackage{xcolor}
\usepackage{textcomp}
\usepackage{manyfoot}
\usepackage{algorithm}
\usepackage{algorithmicx}
\usepackage{algpseudocode}
\usepackage{listings}
\usepackage{float}
\usepackage{tikz}
\usetikzlibrary{positioning, arrows.meta}
\usepackage{adjustbox,lipsum}
\usepackage{silence}
\ErrorsOff*

\newtheorem{theorem}{Theorem}
\newtheorem{proposition}[theorem]{Proposition}%
\newtheorem{remark}{Remark}%

\title{Shape Operator PCA: Curvature-Aware \\ Projections for Geometric Machine Learning}

\author{
 Alexandre Luis Magalh\~aes Levada\\
  Federal University of S\~ao Carlos\\
  13565-905, S\~ao Carlos-SP, Brazil\\
  \texttt{alexandre.levada@ufscar.br} \\
}

\begin{document}
\maketitle

\begin{abstract}
In this paper, we propose SHOPCA (Shape Operator-based Principal Component Analysis), a novel method for unsupervised metric learning and dimensionality reduction that incorporates differential geometric information into the covariance structure of classical PCA. The method regularizes the global covariance matrix using the mean shape operator, defined as the average of the absolute local shape operators estimated from the data manifold, steering the principal components toward directions of both maximum variance and informative curvature and yielding a richer, more discriminative feature space. The regularization is governed by a single, trace-normalized mixing coefficient $\alpha$, which recovers standard PCA at $\alpha = 0$ and a purely curvature-driven embedding at $\alpha \to \infty$, making SHOPCA a natural, interpretable, and cross-dataset-comparable generalization of the classical method. We introduce a fully unsupervised selection criterion based on the spectral eigengap of the regularized covariance matrix: $\alpha$ is chosen to maximize the relative separation between the top-$d$ and remaining eigenvalues, identifying the most well-defined $d$-dimensional invariant subspace without ever consulting class labels (label-free strategy). We evaluate SHOPCA on more than 50 real-world benchmark datasets, comparing against PCA, ISOMAP and UMAP using four external clustering indices: Adjusted Rand Index (ARI), Normalized Mutual Information (NMI), Fowlkes-Mallows index (FM), and V-measure. Experimental results show that SHOPCA consistently improves clustering quality over PCA across a broad range of datasets, and surpasses UMAP on small-sample settings where iterative neighborhood-based manifold estimation is known to degrade. The proposed approach is computationally tractable, parameter-efficient, and applicable to any domain where a fully unsupervised, geometry-aware metric is desirable.
\end{abstract}

\section{Introduction}
\label{sec:introduction}
The ability to measure similarity between data points is central to a wide range of machine learning and pattern recognition tasks. Algorithms such as k-nearest neighbor classifiers, hierarchical clustering, and spectral methods are all governed by the geometry of the input space, making the choice of distance metric a foundational design decision rather than an implementation detail \cite{Brian2013,Bellet2015}. This observation motivates the field of metric learning, which seeks to automatically infer a task-appropriate geometry from data itself \cite{Kaya2019}. A key insight, often underappreciated, is that metric learning and dimensionality reduction are not independent endeavors but two complementary perspectives on the same underlying problem. Dimensionality reduction and metric learning are closely intertwined: most metric learning algorithms rely on PCA as a pre-processing step to achieve tractability and robustness to noise \cite{Jolliffe2016}, yet there is no principled reason to assume that a variance-maximizing projection is optimal for the metric to be learned. Harandi et al. \cite{Harandi2017} argue that treating these two steps separately is fundamentally limiting and propose instead a Riemannian framework that jointly learns a dimensionality-reducing mapping and a metric in the induced space \cite{Zhou2021,Ghojogh2023}, achieving higher accuracy than state-of-the-art metric learning algorithms while working directly on high-dimensional features. This unification is further supported by the observation that spectral dimensionality reduction methods can be uniformly interpreted as instances of kernel PCA with different kernels \cite{Wang2015}, establishing a formal bridge between manifold-based projection and kernel metric learning \cite{McInnes2018}. Taken together, these results suggest that the most informative low-dimensional representation is one in which the projection and the metric are co-designed, a principle that motivates the geometry-aware regularization strategy proposed in the present work.

This co-design principle finds its most general expression in the recent movement toward Geometric Machine Learning (GML) \cite{GML}, which extends learning models beyond flat Euclidean spaces to the richer structures offered by non-Euclidean geometry, topology, and algebra \cite{Bronstein2017,Bronstein2021,Bronstein2025,Papillon2025}. The foundational work of Bronstein et al. \cite{Bronstein2017} shows that the most successful deep architectures can be understood as instances of a single geometric blueprint grounded in symmetry and invariance, and subsequent work has generalized structured learning to non-Euclidean domains such as graphs and manifolds \cite{Papillon2025}. Although GML has been predominantly instantiated through deep, iterative architectures, its core motivation applies equally to classical multivariate statistics: if data lies on a curved manifold, any method that ignores this curvature -- including PCA, the workhorse pre-processing step of the metric-learning pipelines discussed above -- operates on a flawed geometric premise.

Despite this convergence of dimensionality reduction, metric learning, and geometric machine learning around the idea that projection and metric should be co-designed from the data's intrinsic geometry, three practical gaps have limited its adoption as a routine, general-purpose tool. First, GML-inspired architectures typically require deep or iterative optimization \cite{Bronstein2017,Bronstein2021}, which is computationally heavy and poorly suited to the small-sample, high-dimensional regimes common in bioinformatics, spectroscopy, and other scientific domains. Second, joint metric-learning and dimensionality-reduction frameworks such as the Riemannian approach of Harandi et al. \cite{Harandi2017} learn their geometry from labeled data, which precludes their use in purely exploratory or unsupervised analysis, precisely the setting in which PCA is most widely used in practice. Third, neighborhood-graph-based nonlinear embeddings such as UMAP \cite{McInnes2018} construct their geometric representation from a $k$-nearest-neighbor graph, and this construction is known to become unreliable when the number of samples is small, since a sparse, unstable neighborhood graph no longer faithfully represents the underlying manifold. At the other extreme, classical linear methods such as PCA remain entirely agnostic to the manifold's intrinsic curvature, retaining only its variance structure \cite{Jolliffe2016}. To the best of our knowledge, no existing method incorporates second-order differential-geometric information, curvature, as opposed to the first-order, variance-only geometry captured by PCA, into a linear, closed-form, and fully unsupervised metric-learning pipeline, nor does any such method extend its label-free character to the selection of its own regularization hyperparameter, which is typically tuned against labeled validation data or left to ad hoc heuristics with no geometric justification. This three-part gap, computational cost, reliance on labels, and instability in small-sample regimes, motivates the present work.

Motivated by the geometric perspective outlined above, we propose \textsc{SHOPCA} (\textit{Shape Operator-based Principal Component Analysis}), a novel method that bridges classical multivariate statistics and differential geometry for unsupervised metric learning and dimensionality reduction, addressing each of the three gaps identified above within a single, linear, closed-form procedure. The central idea is to enrich the global covariance structure of PCA with curvature information derived from the data manifold. To this end, we estimate, for each data point, a local shape operator (a classical object from differential geometry that encodes how the manifold curves in the ambient space) and aggregate these local estimates into a mean shape operator. This mean shape operator is then used to regularize the global covariance matrix, yielding a geometry-aware covariance $\Sigma_{\mathrm{curv}}$. The principal components of $\Sigma_{\mathrm{curv}}$ are therefore steered toward directions of both maximum variance and informative curvature, producing a more discriminative feature space than standard PCA without requiring any labeled data. The main contributions of the proposed method are fivefold:
\begin{enumerate}
	\item a principled geometric regularization of the PCA covariance matrix via the mean shape operator, establishing a formal connection between classical multivariate statistics and differential geometry, and positioning it within the broader GML movement as a linear, closed-form alternative to deep geometric architectures;
	\item a natural, trace-normalized generalization of classical PCA, recovered exactly as standard PCA at $\alpha=0$ and yielding a purely curvature-driven embedding at $\alpha \to \infty$, with a single and cross-dataset-comparable mixing coefficient controlling the geometry-variance trade-off;
	\item a fully unsupervised, label-free criterion for selecting $\alpha$, based on the spectral eigengap of $\Sigma_{\mathrm{curv}}(\alpha)$, which closes the one remaining supervised step of the original pipeline and renders the entire SHOPCA procedure, representation learning and hyperparameter selection alike, independent of class labels;
	\item robustness to small sample sizes, addressing a known limitation of neighborhood-graph-based nonlinear methods such as t-SNE and UMAP, whose local graph estimation degrades when few samples are available; and
	\item a computationally tractable, closed-form linear projection that requires no iterative optimization, no explicit manifold parametrization, and no neighborhood-graph construction beyond the local $k$-NN neighborhoods used to estimate curvature.
\end{enumerate}

The remainder of the paper is organized as follows: Section 2 describes the proposed shape-operator-based PCA method, its trace-normalized regularization, and the unsupervised eigengap criterion for hyperparameter selection in detail. Section 3 presents the computational experiments and the obtained results. Lastly, Section 4 presents our conclusions and final remarks.

\section{Differential Geometry Basics}

Classical PCA characterizes a dataset through a single first-order statistic, the covariance matrix, which captures how the data spreads but is entirely blind to how it bends. Two datasets can share an identical covariance structure while lying on manifolds of markedly different curvature, and it is precisely along directions of high curvature -- where a manifold folds, twists, or separates into distinct branches -- that class boundaries and cluster structure are often most pronounced. Formalizing this notion of ``bending'' requires a small set of classical tools from differential geometry: the manifold itself, its tangent space, and the first and second fundamental forms, which together give rise to the shape operator at the heart of the proposed regularization. We introduce these objects below, following the standard treatment of \cite{doCarmo1976,Lee2012,doCarmo1992}, and indicate throughout how each formal object is subsequently approximated from finite samples. 

\medskip
\noindent\textbf{Manifold.} An $m$-dimensional smooth manifold is a topological space that is locally Euclidean of dimension $m$: every point admits a neighborhood homeomorphic to an open subset of $\mathbb{R}^m$ via a chart, and the transition maps between overlapping charts are smooth \cite{Lee2012}. Throughout this paper we consider a manifold $\mathcal{M}$ embedded in the ambient space via a smooth embedding $\mathcal{M} \hookrightarrow \mathbb{R}^d$, of intrinsic dimension $m \ll d$ -- the standard manifold hypothesis underlying most dimensionality-reduction and manifold-learning methods, under which the observed high-dimensional data $\mathcal{X} = \{\mathbf{x}_1, \ldots, \mathbf{x}_n\} \subset \mathbb{R}^d$ are assumed to be samples drawn from such an $\mathcal{M}$.

\medskip
\noindent\textbf{Tangent space.} At each point $\mathbf{x} \in \mathcal{M}$, the tangent space $T_{\mathbf{x}}\mathcal{M}$ is the $m$-dimensional vector space of vectors tangent to $\mathcal{M}$ at $\mathbf{x}$, equivalently the space spanned by the velocity vectors of smooth curves on $\mathcal{M}$ through $\mathbf{x}$ \cite{doCarmo1976,Lee2012}. It is the best linear (first-order) approximation of $\mathcal{M}$ at $\mathbf{x}$. This is the object directly targeted by the local estimation procedure of Section~\ref{sec:local_shape_operator}: the leading eigenvectors of the local sample covariance matrix computed from the $k$-nearest neighbors of $\mathbf{x}_i$ furnish an empirical orthonormal basis for $T_{\mathbf{x}_i}\mathcal{M}$, an approximation strategy also used, in various forms, throughout the manifold-learning literature.

\medskip
\noindent\textbf{First fundamental form (metric tensor).} Given an orthonormal basis $\{\mathbf{w}_1, \ldots, \mathbf{w}_m\}$ of $T_{\mathbf{x}}\mathcal{M}$, the first fundamental form is the bilinear form obtained by restricting the ambient Euclidean inner product to the tangent space,
\begin{equation}
	\mathrm{I}_{\mathbf{x}}(\mathbf{v}, \mathbf{w}) = \langle \mathbf{v}, \mathbf{w} \rangle, \qquad \mathbf{v}, \mathbf{w} \in T_{\mathbf{x}}\mathcal{M},
	\label{eq:first_fund_form}
\end{equation}
represented, relative to the chosen basis, by the metric tensor $g_{\mathbf{x}} = (g_{ij})$ with $g_{ij} = \langle \mathbf{w}_i, \mathbf{w}_j \rangle$ \cite{doCarmo1976,doCarmo1992}. The first fundamental form is intrinsic, it determines all metric properties of $\mathcal{M}$ (lengths, angles, volumes) measurable without reference to the ambient space, and is precisely the object that metric learning seeks to adapt to a given task \cite{Kaya2019,Bellet2015}. In the proposed method, the local sample covariance matrix $\mathbf{C}_i$ plays the role of an estimate of the (inverse) metric tensor at $\mathbf{x}_i$, following the classical identification of a local Gaussian covariance with the Mahalanobis metric it induces.

\medskip
\noindent\textbf{Second fundamental form.} Whereas the first fundamental form is intrinsic, the second fundamental form is extrinsic: it measures how $\mathcal{M}$ curves away from its own tangent space as one moves through the ambient space $\mathbb{R}^d$. For a hypersurface $\mathcal{M} \hookrightarrow \mathbb{R}^d$ (codimension one, admitting a smooth unit normal field $\mathbf{n}: \mathcal{M} \to \mathbb{S}^{d-1}$ up to sign), the second fundamental form at $\mathbf{x}$ is the symmetric bilinear form
\begin{equation}
	\mathrm{II}_{\mathbf{x}}(\mathbf{v}, \mathbf{w}) = \langle -\nabla_{\mathbf{v}}\mathbf{n}, \, \mathbf{w} \rangle,
	\label{eq:second_fund_form}
\end{equation}
where $\nabla_{\mathbf{v}}$ denotes the ambient directional derivative along $\mathbf{v}$ \cite{doCarmo1976,ONeill2006}. Intuitively, $\mathrm{II}_{\mathbf{x}}$ quantifies the rate at which $\mathcal{M}$ pulls away from the tangent plane $T_{\mathbf{x}}\mathcal{M}$ along a given direction, and is the source of all extrinsic curvature information used in the present work; it is this quantity that the quadratic and cross-product terms of the local PCA basis (Section~\ref{sec:local_shape_operator}) are designed to approximate.

\medskip
\noindent\textbf{Shape operator.} The shape operator, or Weingarten map, at $\mathbf{x} \in \mathcal{M}$ is the linear endomorphism of the tangent space
\begin{equation}
	S_{\mathbf{x}}: T_{\mathbf{x}}\mathcal{M} \to T_{\mathbf{x}}\mathcal{M}, \qquad S_{\mathbf{x}}(\mathbf{v}) = -\nabla_{\mathbf{v}}\mathbf{n},
	\label{eq:shape_operator_def}
\end{equation}
i.e., minus the derivative of the unit normal field along $\mathbf{v}$ \cite{doCarmo1976}. It is self-adjoint with respect to the first fundamental form, $\mathrm{I}_{\mathbf{x}}(S_{\mathbf{x}}(\mathbf{v}), \mathbf{w}) = \mathrm{I}_{\mathbf{x}}(\mathbf{v}, S_{\mathbf{x}}(\mathbf{w}))$, and therefore admits an orthonormal eigenbasis with real eigenvalues $\kappa_1, \ldots, \kappa_{m}$, the principal curvatures, whose eigenvectors are the principal directions along which $\mathcal{M}$ bends maximally and minimally \cite{doCarmo1976,ONeill2006}. The shape operator relates the two fundamental forms via $\mathrm{II}_{\mathbf{x}}(\mathbf{v}, \mathbf{w}) = \mathrm{I}_{\mathbf{x}}(S_{\mathbf{x}}(\mathbf{v}), \mathbf{w})$, or, in matrix form, $S_{\mathbf{x}} = \mathrm{I}_{\mathbf{x}}^{-1}\,\mathrm{II}_{\mathbf{x}}$, exactly the identity used to define the local shape operator estimator in Eq.~(9), with $\mathrm{I}_{\mathbf{x}}^{-1}$ approximated by the local covariance matrix $\mathbf{C}_i$. From the eigenvalues of $S_{\mathbf{x}}$, two classical scalar invariants follow directly: the mean curvature $H = \tfrac{1}{m}\sum_j \kappa_j = \tfrac{1}{m}\mathrm{tr}(S_{\mathbf{x}})$, an extrinsic average-bending measure, and the Gaussian curvature $K = \prod_j \kappa_j = \det(S_{\mathbf{x}})$, which -- despite being defined through the extrinsic shape operator -- is in fact intrinsic to $\mathcal{M}$ by Gauss's \textit{Theorema Egregium} \cite{doCarmo1976}.

The shape operator thus offers a compact, basis-independent, and complete local description of manifold curvature, unifying the intrinsic (first fundamental form) and extrinsic (second fundamental form) geometry of $\mathcal{M}$ into a single linear operator at each point. Section~\ref{sec:local_shape_operator} develops a finite-sample estimator of $S_{\mathbf{x}}$ from a local $k$-nearest-neighbor neighborhood, and Section~2 uses the resulting mean shape operator $\bar{S}$ to regularize the global PCA covariance matrix, steering its principal components toward directions of both maximum variance and informative curvature. We remark that the classical hypersurface formulation above, with a single normal direction, is adopted here for notational clarity; its extension to general codimension $d - m > 1$, where the second fundamental form is valued in the full normal bundle rather than along a single normal vector \cite{doCarmo1992}, is handled operationally in Section~\ref{sec:local_shape_operator} through the local PCA frame, which spans the tangent directions directly and treats the remaining directions collectively via the quadratic and cross-product terms of the local basis.

\subsection{Approximating the Shape Operator}

The shape operator is a central concept in differential geometry, providing an intrinsic and compact characterization of curvature through the relationship between the first and second fundamental forms of a smooth manifold. Intuitively, it quantifies how the unit normal vector field varies as one moves along directions in the tangent space, thereby encoding the local bending behavior of the manifold. Formally, let $\mathcal{M} \hookrightarrow \mathbb{R}^d$ be a smooth hypersurface and let $\mathbf{n} : \mathcal{M} \to \mathbb{S}^{d-1}$ denote the unit normal
vector field. The shape operator at a point $\mathbf{x} \in \mathcal{M}$ is the linear map $\mathcal{S}_\mathbf{x} : T_\mathbf{x}\mathcal{M} \to T_\mathbf{x}\mathcal{M}$ defined by
\begin{equation}
	\mathcal{S}_\mathbf{x}(\mathbf{v}) = -\nabla_{\mathbf{v}}\,\mathbf{n},
	\label{eq:shape_op_def}
\end{equation} where $\nabla_{\mathbf{v}}$ denotes the covariant derivative of $\mathbf{n}$ in the direction $\mathbf{v} \in T_\mathbf{x}\mathcal{M}$. This operator is self-adjoint with respect to the Riemannian metric, and its spectral decomposition yields real eigenvalues $\kappa_1, \kappa_2, \ldots, \kappa_{d-1}$, known as the \textit{principal curvatures}, which measure the intensity of bending along specific tangential directions, while the associated eigenvectors define the \textit{principal directions} along which these curvatures are attained. From these eigenvalues, two fundamental scalar invariants are derived: the \textit{mean curvature}
\begin{equation}
	H = \frac{1}{d-1}\sum_{j=1}^{d-1}\kappa_j
	\label{eq:mean_curv}
\end{equation} and the \textit{Gaussian curvature}
\begin{equation}
	K = \prod_{j=1}^{d-1}\kappa_j,
	\label{eq:gaussian_curv}
\end{equation} which provide complementary summaries of the geometric behavior of $\mathcal{M}$. This decomposition simultaneously captures both the magnitude and orientation of local geometric variation, making the shape operator a natural and principled tool for incorporating manifold geometry into data-driven learning methods.

\begin{figure}
	\centering
	\includegraphics[scale=0.45]{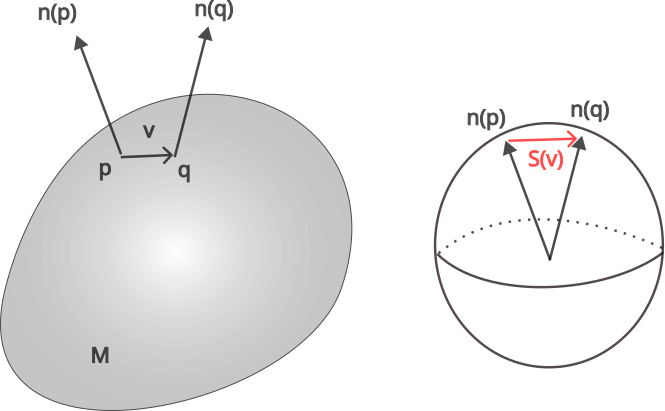}
	\caption{The shape operator measures how a normal vector changes from the tail to the tip of a tangent vector $\vec{v}$. In other words, it tells us how fast the normal changes, or roughly speaking, how fast a surface turns if you roll it along the floor in some direction.}
	\label{fig:shape}
\end{figure}

\subsection{Computing the Local Shape Operators}
\label{sec:local_shape_operator}

Let $\mathcal{X} = \{\mathbf{x}_1, \mathbf{x}_2, \ldots, \mathbf{x}_n\} \subset \mathbb{R}^d$
be the dataset and let $k$ be a fixed number of nearest neighbors. The computation of the
mean shape operator $\bar{\mathcal{S}}$ proceeds in four steps for each point
$\mathbf{x}_i \in \mathcal{X}$.

\paragraph{Step 1: $k$-Nearest Neighbor Search.}
For each point $\mathbf{x}_i$, we identify its $k$ nearest neighbors in the Euclidean
sense:
\begin{equation}
	\mathcal{N}_i = \mathrm{kNN}(\mathbf{x}_i,\, k)
	= \bigl\{\mathbf{x}_{i_1}, \mathbf{x}_{i_2}, \ldots, \mathbf{x}_{i_k}\bigr\}
	\subset \mathcal{X},
	\label{eq:knn}
\end{equation}
\noindent where the neighborhood $\mathcal{N}_i$ provides a local sample of the
data manifold $\mathcal{M}$ in the vicinity of $\mathbf{x}_i$.

\paragraph{Step 2: Local Covariance Matrix and Metric Tensor.}
Given the local neighborhood $\mathcal{N}_i$, we compute the $d \times d$ local
covariance matrix
\begin{equation}
	\mathbf{C}_i
	= \frac{1}{k - 1}
	\sum_{\mathbf{x} \in \mathcal{N}_i}
	\bigl(\mathbf{x} - \bar{\mathbf{x}}_i\bigr)
	\bigl(\mathbf{x} - \bar{\mathbf{x}}_i\bigr)^{\!\top},
	\label{eq:local_cov}
\end{equation}
\noindent where $\bar{\mathbf{x}}_i = \frac{1}{k}\sum_{\mathbf{x} \in \mathcal{N}_i}
\mathbf{x}$ is the local mean. The local metric tensor, which approximates the first
fundamental form of $\mathcal{M}$ at $\mathbf{x}_i$, is identified as $\mathbf{g}_i \approx \mathbf{C}_i^{-1}$. This identification is motivated by the fact that the inverse covariance matrix in the Mahalanobis distance encodes the directions and magnitudes of spread of the manifold in
the ambient space, thereby approximating the pullback metric induced by the embedding
$\mathcal{M} \hookrightarrow \mathbb{R}^d$.

\paragraph{Step 3: Local PCA Basis and Second Fundamental Form.}
Let $\mathbf{C}_i = \mathbf{W}_i \, \mathbf{\Lambda}_i \, \mathbf{W}_i^\top$ be the
eigendecomposition of the local covariance matrix, where the columns
$\mathbf{w}_1^{(i)}, \mathbf{w}_2^{(i)}, \ldots, \mathbf{w}_d^{(i)}$ of $\mathbf{W}_i$
are the local principal directions sorted in decreasing order of variance. These
eigenvectors define a local orthonormal frame that approximates the tangent space of
$\mathcal{M}$ at $\mathbf{x}_i$.

We approximate the second fundamental form using quadratic and cross-product terms
constructed from the local PCA basis \cite{Donoho2003}. Specifically, we form the matrix
$\mathbf{Q}_i \in \mathbb{R}^{d \times p}$, where $p =1 + d + d(d+1)/2$,
whose columns comprise:
\begin{enumerate}
	\item a constant column $\mathbf{1} \in \mathbb{R}^d$;
	\item the $d$ principal directions $\mathbf{w}_j^{(i)}$, for $j = 1, \ldots, d$;
	\item the $d$ squared terms $\bigl(\mathbf{w}_j^{(i)}\bigr)^{\!\circ 2}$,
	where $\circ 2$ denotes the elementwise square; and
	\item the $\binom{d}{2}$ cross-product terms
	$\mathbf{w}_j^{(i)} \circ \mathbf{w}_l^{(i)}$ for all $j < l$,
	where $\circ$ denotes the elementwise (Hadamard) product.
\end{enumerate}
\noindent The quadratic and cross-product columns of $\mathbf{Q}_i$ collectively form
the matrix $\mathbf{H}_i \in \mathbb{R}^{d \times r}$, with
$r = d + \binom{d}{2}$, obtained by discarding the first $d + 1$ columns of
$\mathbf{Q}_i$. The second fundamental form is then approximated as
\begin{equation}
	\mathbf{II}_i = \mathbf{H}_i \, \mathbf{H}_i^\top \;\in\; \mathbb{R}^{d \times d}.
	\label{eq:second_ff}
\end{equation}
\noindent Intuitively, $\mathbf{II}_i$ captures the bending of $\mathcal{M}$ at
$\mathbf{x}_i$ by measuring how the second-order structure of the local PCA frame
deviates from a flat configuration.

\paragraph{Step 4: Local Shape Operator and Mean Shape Operator.}
The local shape operator (Weingarten map) at $\mathbf{x}_i$ is defined as the linear
map relating the second fundamental form to the first fundamental form. Using the
approximations established in Steps~2 and~3, the local shape operator is given by
\begin{equation}
	\mathcal{S}_i = -\,\mathbf{II}_i\,\mathbf{g}_i^{-1}
	= -\,\mathbf{II}_i\,\mathbf{C}_i
	\;\in\; \mathbb{R}^{d \times d},
	\label{eq:shape_op}
\end{equation}
\noindent where the negative sign follows the standard convention in differential
geometry. Finally, the \textit{mean shape operator} is obtained by averaging the
absolute values of the local shape operators over all $n$ data points:
\begin{equation}
	\bar{\mathcal{S}}
	= \frac{1}{n} \sum_{i=1}^{n} \mathcal{S}_i
	= -\frac{1}{n} \sum_{i=1}^{n} \mathbf{II}_i\,\mathbf{C}_i,
	\label{eq:mean_shape_op}
\end{equation}
\noindent The matrix $\bar{\mathcal{S}}$ aggregates local curvature information from the entire dataset into a single $d \times d$ symmetric matrix, which is subsequently used to regularize
the global covariance matrix. The complete pseudocode for the proposed average shape operator estimation is presented in Algorithm~\ref{alg:shape_operator}.

\begin{algorithm}
	\caption{Average Shape Operator Estimation}
	\label{alg:shape_operator}
	\begin{algorithmic}[1]
		\Require $X \in \mathbb{R}^{n \times d}$: data matrix ($n$ samples, $d$ features), $k \in \mathbb{N}$: number of neighbors.
		\Ensure $S \in \mathbb{R}^d$: average shape operator.
		\Function{Shape-Operator}{$X, k$}
		\For{$i \gets 1$ \textbf{to} $n$}
		\State $P_i \gets$ \Call{Nearest\_Neighbors}{$\vec{x}_i, k$}
		\State $\Sigma_i \gets$ \Call{Covariance}{$P_i$}
		\State $U \gets$ \Call{Eigenvectors}{$\Sigma_i$}
		\State Build matrix $\mathbf{Q}_i \in \mathbb{R}^{d \times p}$
		\State Build matrix $H_i$ (last $d + \binom{d}{2}$ columns of $\mathbf{Q}_i$)
		\State $\mathbf{II}_i \gets H_i H_i^{T}$
		\State $\mathcal{S}_i \gets -\mathbf{II}_i \Sigma_i$
		\State $S \gets S + S_i$
		\EndFor
		\State \Return $S/n$
		\EndFunction
	\end{algorithmic}
\end{algorithm}

A note on the computational complexity of the proposed average shape operator estimation algorithm: finding the k nearest neighbors in a d-dimensional space can be done in $O(n d \log n)$ with a KD-tree. For each sample, the computation of the local covariance matrix is $O(kd^2)$ and its eigendecomposition is $O(d^3)$. The multiplication of $H_i H_i^{T}$ has cost $O(d^2 r)$, where $r = d(d+1)/2$, which leads to a total cost of $O(d^4)$. Therefore, the overall complexity for the estimation of the average shape operator is $O(nd \log n + nd^2(k + d^2 ))$, which means that the proposed method is more sensitive to an increase in the number of features $d$ than to an increase in the number of samples $n$. In case of large $d$, it is possible to reduce the number of features with regular PCA prior to the estimation of the average shape operator.

\subsection{Vectorized Implementation of the Average Shape Operator}
\label{sec:vectorized_shape_operator}

Algorithm~\ref{alg:shape_operator} computes the average shape operator $\bar S$ by looping over the $n$ samples one at a time: for each $\mathbf{x}_i$, a local covariance matrix is formed, eigendecomposed, and combined with a quadratic/cross-product expansion of its eigenvectors to produce a single local shape operator $S_i$, which is then accumulated into the running sum. Every step within one iteration is individually cheap, $O(kd^2)$ for the covariance, $O(d^3)$ for the eigendecomposition, $O(d^4)$ for the dominant matrix product, but the loop itself is executed at the Python interpreter level, and each of these steps is dispatched as its own small linear-algebra call. For matrices of moderate size $d$, the fixed overhead of dispatching $n$ such calls (interpreter bookkeeping, NumPy ufunc setup, and the inability of a single small matrix operation to saturate a modern CPU's cache hierarchy or SIMD lanes) frequently exceeds the cost of the arithmetic itself, so wall-clock time scales far worse than the operation count alone would suggest. We remove this bottleneck by reformulating each step of Algorithm~\ref{alg:shape_operator} as a single \emph{batched} tensor operation over groups of $b$ samples at a time, rather than one sample at a time. Given a batch of indices $B \subset \{1,\ldots,n\}$, $|B|=b$:

\begin{enumerate}
	\item \textbf{Batched local covariances.} The $k$-nearest-neighbor sets for the whole batch are gathered into a single tensor of shape $(b,k,d)$, centered by their per-point local means, and the $b$ local covariance matrices $\{\mathbf{C}_i\}_{i \in B}$ are obtained in one call via a batched matrix product $\mathbf{C}_i = \frac{1}{k-1}\Delta_i^{\top}\Delta_i$, dispatched as a single \texttt{(b,d,k)}$\times$\texttt{(b,k,d)} batched GEMM rather than $b$ separate calls to a covariance routine.
	\item \textbf{Batched eigendecomposition.} Since every $\mathbf{C}_i$ is symmetric, the $b$ local PCA bases $\{\mathbf{W}_i\}_{i \in B}$ are obtained from a single call to a batched symmetric eigensolver on the stacked tensor $(b,d,d)$, rather than $b$ sequential eigendecompositions.
	\item \textbf{Vectorized quadratic/cross terms.} The squared columns $\mathbf{W}_i^{\circ 2}$ are obtained by one elementwise square over the whole batch tensor; the $\binom{d}{2}$ cross-product columns are obtained by one elementwise multiplication of two fancy-indexed views of the batch tensor (using precomputed index pairs, computed once for all batches), replacing the explicit nested loops over column pairs used in Algorithm~\ref{alg:shape_operator}.
	\item \textbf{Batched matrix products.} The second fundamental form approximation $\mathrm{II}_i = \mathbf{H}_i\mathbf{H}_i^{\top}$ and the local shape operator $S_i = -\mathrm{II}_i\mathbf{C}_i$ are each computed for the entire batch in a single batched matrix product.
\end{enumerate}

The $b$ resulting local shape operators are summed and the loop advances to the next batch of $b$ points; $\bar S$ is obtained by dividing the total sum by $n$ once every batch has been processed. Algorithm~\ref{alg:shape_operator_vec} summarizes the resulting procedure.

\begin{algorithm}
	\caption{Vectorized Average Shape Operator Estimation}
	\label{alg:shape_operator_vec}
	\begin{algorithmic}[1]
		\Require $X \in \mathbb{R}^{n\times d}$: data matrix; $k \in \mathbb{N}$: number of neighbors; $b \in \mathbb{N}$: batch size
		\Ensure $\bar S \in \mathbb{R}^{d\times d}$: average shape operator
		\Function{Vectorized-Shape-Operator}{$X, k, b$}
		\State $\{P_i\}_{i=1}^{n} \gets$ \textsc{Batch-Nearest-Neighbors}$(X, k)$ \Comment{single KD-tree query, $O(nd\log n)$}
		\State $\bar S \gets \mathbf{0} \in \mathbb{R}^{d\times d}$
		\For{each batch $B$ of $b$ indices in $\{1,\ldots,n\}$}
		\State $\{\mathbf{C}_i\}_{i\in B} \gets$ \textsc{Batched-Covariance}$(\{P_i\}_{i\in B})$ \Comment{single batched GEMM}
		\State $\{\mathbf{W}_i, \boldsymbol\Lambda_i\}_{i\in B} \gets$ \textsc{Batched-Eigh}$(\{\mathbf{C}_i\}_{i\in B})$ \Comment{single batched eigensolver call}
		\State $\{\mathbf{H}_i\}_{i\in B} \gets$ \textsc{Vectorized-Quadratic-Cross}$(\{\mathbf{W}_i\}_{i\in B})$ \Comment{elementwise + fancy indexing}
		\State $\{\mathrm{II}_i\}_{i\in B} \gets$ \textsc{Batched-Matmul}$(\{\mathbf{H}_i\}, \{\mathbf{H}_i\}^{\top})$
		\State $\{S_i\}_{i\in B} \gets -$\textsc{Batched-Matmul}$(\{\mathrm{II}_i\}, \{\mathbf{C}_i\})$
		\State $\bar S \gets \bar S + \sum_{i \in B} S_i$
		\EndFor
		\State \Return $\bar S / n$
		\EndFunction
	\end{algorithmic}
\end{algorithm}

\subsubsection{Computational Complexity}
\label{sec:vectorized_complexity}

Fix a batch of size $b$. Gathering and centering the neighbor tensor costs $O(bkd)$. The batched covariance product costs $O(bkd^2)$ (a $(d\times k)\times(k\times d)$ product per sample, over $b$ samples). The batched eigendecomposition costs $O(bd^3)$. Constructing $\mathbf{H}_i$ (the $r = d + \binom{d}{2} = O(d^2)$ quadratic and cross-product columns) costs $O(bd \cdot r) = O(bd^3)$ per batch. The dominant step is $\mathrm{II}_i = \mathbf{H}_i\mathbf{H}_i^{\top}$, a $(d\times r)\times(r\times d)$ product costing $O(d^2 r) = O(d^4)$ per sample, or $O(bd^4)$ per batch; the final product $S_i=-\mathrm{II}_i\mathbf{C}_i$ costs $O(bd^3)$, dominated by the previous term. Summing these per-batch costs and multiplying by the $n/b$ batches gives a total of
$$
O\!\left(nd\log n + nkd^2 + nd^3 + nd^4\right) = O\!\left(nd\log n + nkd^2 + nd^4\right),
$$
since the $O(nd^3)$ terms are dominated by $O(nd^4)$. This is exactly the complexity of the original per-sample Algorithm~\ref{alg:shape_operator}, $O(nd\log n + nd^2(k+d^2))$.

\begin{remark}
	Batching does not change the asymptotic time complexity of average shape operator estimation: the total number of floating-point operations is unchanged, since Algorithm~\ref{alg:shape_operator_vec} performs the same per-sample arithmetic as Algorithm~\ref{alg:shape_operator}, only reorganized into $\lceil n/b \rceil$ large batched calls instead of $n$ small sequential ones. The practical speedup obtained is therefore a \emph{constant-factor} effect, arising from (i) amortizing fixed per-call Python/NumPy dispatch overhead over $b$ samples instead of one, and (ii) batched matrix multiplication and eigendecomposition being dispatched to BLAS/LAPACK routines that exploit multi-threading, cache locality, and SIMD vectorization far more effectively than $n$ sequential calls on small matrices. Empirically, on a synthetic benchmark with $n=6{,}430$ and $d=36$ (comparable in scale to the \texttt{satimage} dataset in Table~2), we measured a $2.7\times$ wall-clock speedup from this reformulation alone, with larger relative gains expected as $n$ grows.
\end{remark}

\begin{remark}[Memory--time trade-off]
	The naive Algorithm~\ref{alg:shape_operator} holds only one sample's intermediate matrices in memory at a time, an $O(d^2)$ footprint independent of $n$. Algorithm~\ref{alg:shape_operator_vec} instead materializes the batch tensor $\{\mathbf{H}_i\}_{i\in B}$ of shape $(b,d,r)$ with $r=O(d^2)$, giving a peak memory footprint of $O(bd^3)$. The batch size $b$ is therefore an explicit, tunable trade-off between wall-clock speed and memory use: larger $b$ amortizes dispatch overhead further but increases peak memory proportionally, which matters in particular for datasets where $d$ is capped at $100$ via the pre-PCA step (Section~3), since $O(bd^3)$ then grows quickly with $b$.
\end{remark}

\section{Shape Operator Based PCA}
\label{sec:method}

Let $\mathcal{X} = \{\mathbf{x}_1, \mathbf{x}_2, \ldots, \mathbf{x}_n\} \subset \mathbb{R}^d$ be a dataset of $n$ observations in a $d$-dimensional ambient space. We assume that $\mathcal{X}$ is sampled from an unknown smooth Riemannian manifold $\mathcal{M} \hookrightarrow \mathbb{R}^d$ of intrinsic dimension $m \ll d$. Classical PCA seeks a linear projection $P : \mathbb{R}^d \to \mathbb{R}^m$ that maximizes the variance of the projected data, operating exclusively on the global covariance matrix
\begin{equation}
	\Sigma = \frac{1}{n}\sum_{i=1}^{n}
	(\mathbf{x}_i - \boldsymbol{\mu})(\mathbf{x}_i - \boldsymbol{\mu})^\top,
	\label{eq:cov_classical}
\end{equation} where $\boldsymbol{\mu}$ is the sample mean. While variance is a natural measure of data spread, it is blind to the intrinsic geometry of $\mathcal{M}$: two datasets with identical covariance structure may lie on manifolds of fundamentally different curvature, and hence exhibit very different cluster structure, class boundaries, and local neighborhoods. To capture this geometric information, we propose to augment the covariance matrix with a curvature-derived term computed from the data manifold. The key geometric object is the \textit{shape operator} (also known as the Weingarten map), a classical construct from differential geometry that, for each point
$\mathbf{x}_i \in \mathcal{M}$, encodes the rate and direction at which the manifold curves relative to the ambient space. Intuitively, while the covariance matrix captures \textit{where} the data spreads, the shape operator captures \textit{how} the underlying surface bends, and directions of strong curvature are precisely those along which the manifold folds, separates, or
forms boundaries between distinct data groups. By aggregating local shape operators into a global mean shape operator $\bar{\mathcal{S}}$ and using it to regularize $\Sigma$, we obtain a geometry-aware covariance
\begin{equation}
	\Sigma_{\mathrm{curv}} = \Sigma + \lambda\,\bar{\mathcal{S}}, \qquad \lambda \geq 0,
	\label{eq:cov_curv}
\end{equation} whose principal components reflect both the statistical spread and the intrinsic geometric structure of the data. This regularization requires no class labels, no explicit manifold parametrization, and no nonlinear optimization, making it a natural, interpretable, and computationally tractable extension of classical PCA for unsupervised metric learning.

In the limiting cases, the proposed formulation recovers two well-defined methods: when $\lambda = 0$, it reduces exactly to standard PCA; and as $\lambda \to \infty$, the variance term becomes negligible and the principal components are determined entirely by the mean shape operator $\bar{\mathcal{S}}$, yielding a purely curvature-driven spectral embedding that we term \textit{Shape Operator Eigenmaps}, a novel dimensionality reduction method in its own right, whose theoretical properties and empirical behavior we leave as a subject for future investigation. The optimal balance between these two competing objectives depends directly on the intrinsic geometry of each dataset, and identifying $\lambda^{*}$ allows this trade-off to be resolved in a data-driven manner.

A practical difficulty immediately arises, however. The magnitude of $\bar{\mathcal S}$ relative to $\Sigma$ -- and hence the range of $\lambda$ values that meaningfully interpolate between the variance-dominated and curvature-dominated regimes of Eq.~\eqref{eq:cov_curv} -- depends on dataset-specific quantities such as the local sampling density, the chosen neighborhood size $k$, and the manifold's own curvature scale, none of which are comparable across datasets. Consequently, a single fixed candidate set for $\lambda$ can correspond to a negligible perturbation of $\Sigma$ on one dataset and an overwhelming one on another. Selecting $\lambda$ well is therefore essential, and doing so without recourse to class labels -- consistent with the fully unsupervised character of the method -- requires a criterion grounded in the geometry of $\Sigma_{\mathrm{curv}}(\lambda)$ itself rather than in downstream, label-dependent clustering quality. We describe such a criterion next.

\subsection{Unsupervised Estimation of the Regularization Parameter via Spectral Eigengap}
\label{sec:eigengap_selection}

To select $\lambda$ without labeled data, we exploit a purely spectral, geometric criterion: the relative separation, or \textit{eigengap}, between the $m$-th and $(m{+}1)$-th eigenvalues of $\Sigma_{\mathrm{curv}}(\lambda)$. This closes the one remaining supervised step of an otherwise label-free pipeline -- selecting $\lambda$ via cross-validated grid search against an external clustering index would reintroduce exactly the dependence on labeled data that the rest of the method is designed to avoid.

\paragraph{A dataset-comparable candidate grid.} Rather than searching a fixed, dataset-independent set of $\lambda$ values, we center the candidate grid at the scale that equates the trace of the curvature term with that of the covariance,
\begin{equation}
	\text{scale} = \frac{\mathrm{tr}(\Sigma)}{\mathrm{tr}(|\bar{\mathcal{S}}|)},
	\label{eq:trace_scale}
\end{equation}
and search $\lambda_j = c_j \cdot \text{scale}$ for $c_j$ ranging over a wide, geometrically spaced, dataset-independent set of dimensionless multipliers (e.g. $c_j \in [0.1, 100]$). This ensures that $c_j = 1$ corresponds to a comparable variance/curvature balance regardless of the dataset at hand, while leaving $\lambda$ itself unbounded and Eq.~\eqref{eq:cov_curv} unchanged; Eq.~\eqref{eq:trace_scale} only informs which candidate values of $\lambda$ are worth evaluating, not the model itself.

\paragraph{The eigengap criterion.} For each candidate $\lambda_j$, let $e_1(\lambda_j) \geq \cdots \geq e_d(\lambda_j)$ denote the eigenvalues of $\Sigma_{\mathrm{curv}}(\lambda_j)$, ordered by real part. Because $\bar{\mathcal{S}}$ is generally not symmetric (Section~\ref{sec:local_shape_operator}), neither is $\Sigma_{\mathrm{curv}}(\lambda_j)$, and its eigenvalues are in general complex; in practice we find them to be real to within numerical precision, and hereafter work with their real parts. We define the relative eigengap at the target dimension $m$ as
\begin{equation}
	\mathrm{gap}_m(\lambda) = \frac{e_m(\lambda) - e_{m+1}(\lambda)}{\sum_{i=1}^{d} |e_i(\lambda)|},
	\label{eq:eigengap}
\end{equation}
and select
\begin{equation}
	\lambda^{*} = \arg\max_{\lambda \in \{\lambda_1, \ldots, \lambda_J\}} \mathrm{gap}_m(\lambda).
	\label{eq:lambda_star}
\end{equation}
We normalize by the total spectral mass $\sum_i |e_i(\lambda)|$ in Eq.~\eqref{eq:eigengap} rather than by $|e_m(\lambda)|$, the more familiar normalization in the spectral clustering literature \cite{Ng2001,vonLuxburg2007}. Because $\Sigma_{\mathrm{curv}}(\lambda)$ is asymmetric, its eigenvalues can cross zero as $\lambda$ varies; at such a crossing, normalizing by $|e_m(\lambda)|$ produces a spurious, arbitrarily large spike that reflects a vanishing denominator rather than a genuinely well-separated subspace. We verified this failure mode directly in preliminary experiments: the $\lambda$ selected by the unnormalized criterion at such a spike can yield markedly \emph{worse} embeddings than neighboring, smoothly varying values of $\lambda$. Normalizing by the total spectral mass removes this instability.

\paragraph{Theoretical justification for eigengap.} The use of the eigengap in Eq.~\eqref{eq:eigengap} as a proxy for the quality of the induced $m$-dimensional subspace is grounded in classical matrix perturbation theory, in particular the Davis--Kahan $\sin\Theta$ theorem \cite{DavisKahan1970} and its statistically oriented restatement \cite{YuWangSamworth2015}. Informally, the Davis--Kahan theorem bounds the angle between an exact $m$-dimensional invariant subspace of a matrix and the corresponding subspace of a perturbed version of that matrix by a quantity inversely proportional to the gap separating the $m$-th and $(m{+}1)$-th eigenvalues of the unperturbed matrix. Applied here, this means that a large $\mathrm{gap}_m(\lambda)$ certifies that the $m$-dimensional eigenspace of $\Sigma_{\mathrm{curv}}(\lambda)$, and hence the SHOPCA projection itself, is \textit{stable}: small perturbations of $\Sigma_{\mathrm{curv}}(\lambda)$, such as those induced by finite-sample noise in estimating $\Sigma$ and $\bar{\mathcal{S}}$ or by resampling the dataset, rotate the resulting subspace only slightly. Conversely, when $e_m(\lambda)$ and $e_{m+1}(\lambda)$ are nearly degenerate, the top-$m$ eigenspace is only weakly identified by the data, and arbitrarily small perturbations can rotate it substantially, yielding an unstable, poorly reproducible projection. Maximizing $\mathrm{gap}_m(\lambda)$ over $\lambda$ therefore selects the regularization strength for which the resulting low-dimensional representation is, in this precise and classical sense, most robustly determined by the data, without ever consulting class labels. This is the same stability argument that motivates the eigengap heuristic for choosing the number of retained eigenvectors (equivalently, the embedding dimension) in spectral clustering \cite{Ng2001,vonLuxburg2007}; here it is repurposed to select a regularization strength for a fixed target dimension $m$, rather than the dimension itself, but the underlying justification is identical.

\paragraph{Computational cost.} The criterion in Eqs.~\eqref{eq:eigengap}--\eqref{eq:lambda_star} is inexpensive relative to the estimation of $\bar{\mathcal{S}}$ itself. Evaluating $\mathrm{gap}_m(\lambda_j)$ for a single candidate requires only forming $\Sigma_{\mathrm{curv}}(\lambda_j)$ and computing its eigenvalues, an $O(d^3)$ operation; over a grid of $J$ candidates, the total added cost is $O(Jd^3)$, negligible next to the $O(nd\log n + nd^2(k+d^2))$ cost of computing $\bar{\mathcal{S}}$ (Section~\ref{sec:local_shape_operator}) for any realistic $J$. Since both $\Sigma$ and $\bar{\mathcal{S}}$ are computed without reference to class labels, $\lambda^{*}$ closes the one remaining supervised step of the pipeline discussed in the Introduction, rendering representation learning and hyperparameter selection alike fully unsupervised.

\section{Theoretical Justification for Regularization}
\label{sec:theory}

Previous sections introduced $\Sigma_{\mathrm{curv}}(\lambda)$ constructively, as a covariance regularized by an estimated curvature term, and Section~\ref{sec:eigengap_selection} justified the selection of $\lambda$ via a stability argument grounded in classical perturbation theory. This section complements that construction with four further results that (i) situate SHOPCA within the established framework of regularized covariance estimation, (ii) characterize precisely when and why the local and global shape operator estimators admit a real spectrum, (iii) give an exact variational characterization of the resulting principal directions, and (iv) quantify, via a second classical perturbation bound, how far the SHOPCA spectrum can move from that of standard PCA as a function of $\lambda$. Throughout, $\|\cdot\|_2$ denotes the spectral (operator) norm.

\subsection{SHOPCA as Geometry-Informed Covariance Shrinkage}
\label{sec:shrinkage}

The regularized covariance of Eq.~\eqref{eq:cov_curv}, $\Sigma_{\mathrm{curv}} = \Sigma + \lambda \bar{\mathcal S}$, is structurally an instance of a well-established family of estimators in multivariate statistics: covariance \emph{shrinkage} estimators, which combine the sample covariance $\Sigma$ with a structured target matrix to reduce estimation variance, particularly when $n$ is small relative to $d$ \cite{LedoitWolf2004}. The classical Ledoit--Wolf estimator takes the form $\hat\Sigma_{\mathrm{shrink}} = (1-\rho)\Sigma + \rho F$, where $F$ is a low-variance, high-bias target -- typically a scaled identity, $F=\tau I_d$ -- and $\rho \in [0,1]$ is chosen to asymptotically minimize the expected quadratic loss $\mathbb E\|\hat\Sigma_{\mathrm{shrink}} - \Sigma_{\mathrm{true}}\|_F^2$ \cite{LedoitWolf2004}. SHOPCA instantiates this same bias--variance trade-off, but with a shrinkage target that is neither arbitrary nor generic: $\bar{\mathcal S}$ is estimated from the local curvature of the data manifold, biasing $\Sigma_{\mathrm{curv}}$ toward directions informative about class structure (Section~\ref{sec:method}) rather than merely toward isotropy. This gives a concrete explanation for the small-sample robustness observed in Section~\ref{sec:experiments} (Table~4): standard shrinkage trades bias for variance with no regard to whether the resulting bias is useful, whereas $\bar{\mathcal S}$ is itself an \emph{average} of $n$ independent, fixed-size local estimators $\mathcal S_i$, each computed from a $k$-neighborhood whose size does not shrink with $n$, so its own estimation variance need not grow as $n$ becomes small.

\subsection{Spectral Realness of the Shape Operator Estimator}
\label{sec:realness}

Classical differential geometry guarantees that the continuous shape operator $S_{\mathbf x}$ is self-adjoint with respect to the first fundamental form and therefore has a real spectrum \cite{doCarmo1976}. We show the discrete estimator $\mathcal S_i$ inherits this property by an independent algebraic argument, and use it to explain exactly why the aggregate $\bar{\mathcal S}$ loses the guarantee, corroborating the empirical observation of Section~\ref{sec:eigengap_selection} that $\Sigma_{\mathrm{curv}}(\lambda)$ has, in practice, a numerically real but formally unguaranteed spectrum.

\begin{proposition}
	\label{prop:realness}
	Suppose $\mathbf C_i \succ 0$ (e.g. because $k \geq d+1$). Then $\mathcal S_i = -\mathbf{II}_i\mathbf C_i$ has real, non-positive eigenvalues.
\end{proposition}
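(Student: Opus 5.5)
Since $\mathbf{II}_i = \mathbf H_i\mathbf H_i^\top$, it is symmetric positive semidefinite. By hypothesis $\mathbf C_i$ is symmetric positive definite, so it has a unique symmetric positive definite square root $\mathbf C_i^{1/2}$. The idea is that the product of a PSD matrix and a PD matrix, although generally not symmetric, is similar to a symmetric PSD matrix. Its spectrum is therefore real and nonnegative, and the minus sign then makes it non-positive.

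First, we write the similarity transformation explicitly:
\begin{equation*}
\mathbf C_i^{1/2}\,\bigl(\mathbf{II}_i\mathbf C_i\bigr)\,\mathbf C_i^{-1/2} \;=\; \mathbf C_i^{1/2}\,\mathbf{II}_i\,\mathbf C_i^{1/2} \;=\; \bigl(\mathbf C_i^{1/2}\mathbf H_i\bigr)\bigl(\mathbf C_i^{1/2}\mathbf H_i\bigr)^{\!\top}.
\end{equation*}
The right-hand side is a Gram matrix, so it is symmetric positive semidefinite. By the spectral theorem it has real eigenvalues $\mu_1,\dots,\mu_d\ge 0$. Similar matrices share characteristic polynomials, so $\mathbf{II}_i\mathbf C_i$ has the same eigenvalues, and $\mathcal S_i=-\mathbf{II}_i\mathbf C_i$ has eigenvalues $-\mu_j\le 0$.

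A second, equivalent route is to use the fact that $AB$ and $BA$ share the same nonzero eigenvalues, applied to $A=\mathbf{II}_i\mathbf C_i^{1/2}$ and $B=\mathbf C_i^{1/2}$. We will cite this only as a check. We will also note that $\mathcal S_i$ is diagonalizable, since it is similar to a symmetric matrix. Moreover, $\mathcal S_i$ is self-adjoint with respect to the inner product $\langle \mathbf u,\mathbf v\rangle_{\mathbf C_i}=\mathbf u^\top\mathbf C_i\mathbf v$, which is the discrete analogue of self-adjointness with respect to the first fundamental form; indeed $\mathbf C_i\mathcal S_i=-\mathbf C_i\mathbf{II}_i\mathbf C_i$ is symmetric.

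There is no real obstacle here; the argument is essentially one line. The only point that needs care is to justify $\mathbf C_i\succ 0$ rather than merely $\succeq 0$, because the similarity uses $\mathbf C_i^{-1/2}$. The remark $k\ge d+1$ is only heuristic: it guarantees positive definiteness only for neighbors in general position. We will therefore keep $\mathbf C_i\succ 0$ as the stated hypothesis. We will also remark that if $\mathbf C_i$ is merely PSD, a continuity/limiting argument ($\mathbf C_i+\varepsilon I$, $\varepsilon\to 0$) still gives real non-positive eigenvalues, since eigenvalues depend continuously on the matrix entries.

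Finally, we will observe why this argument fails for the average $\bar{\mathcal S}$. Each $\mathcal S_i$ is symmetrizable by a *different* similarity $\mathbf C_i^{1/2}$. A sum of such matrices need not be similar to any symmetric matrix, so realness of the spectrum is not inherited by $\bar{\mathcal S}$ or by $\Sigma_{\mathrm{curv}}$.
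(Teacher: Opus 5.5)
Your proof is correct and is essentially the paper's argument: both reduce $\mathbf{II}_i\mathbf C_i$ to the symmetric PSD matrix $\mathbf C_i^{1/2}\mathbf{II}_i\mathbf C_i^{1/2}$. The paper does this by first passing to $\mathbf C_i\mathbf{II}_i$ via the $AB$/$BA$ spectrum identity and then conjugating by $\mathbf C_i^{-1/2}$, whereas you conjugate $\mathbf{II}_i\mathbf C_i$ directly by $\mathbf C_i^{1/2}$, which makes the $AB$/$BA$ step unnecessary. Your side route with $A=\mathbf{II}_i\mathbf C_i^{1/2}$, $B=\mathbf C_i^{1/2}$ is worth promoting: for square matrices $AB$ and $BA$ share all eigenvalues with multiplicity, so it removes the need for $\mathbf C_i\succ 0$ without any limiting argument. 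This matters because with $k=\log_2 n<d+1$ the local covariance is rank-deficient in many of the paper's experimental settings.
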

\begin{proof}
	For square matrices $A,B$ of equal size, $AB$ and $BA$ share the same eigenvalues, with multiplicity \cite{HornJohnson2013}. Hence $\mathbf{II}_i\mathbf C_i$ and $\mathbf C_i\mathbf{II}_i$ share the same spectrum. Since $\mathbf C_i \succ 0$, its symmetric square root $\mathbf C_i^{1/2}$ is invertible, and
	\[
	\mathbf C_i^{-1/2}(\mathbf C_i\mathbf{II}_i)\mathbf C_i^{1/2} = \mathbf C_i^{1/2}\mathbf{II}_i\mathbf C_i^{1/2},
	\]
	so $\mathbf C_i\mathbf{II}_i$ is similar to $\mathbf C_i^{1/2}\mathbf{II}_i\mathbf C_i^{1/2}$, which is symmetric and positive semi-definite: for any $\mathbf x$, $\mathbf x^\top(\mathbf C_i^{1/2}\mathbf{II}_i\mathbf C_i^{1/2})\mathbf x = (\mathbf C_i^{1/2}\mathbf x)^\top \mathbf{II}_i (\mathbf C_i^{1/2}\mathbf x) \geq 0$, using $\mathbf{II}_i = \mathbf H_i\mathbf H_i^\top \succeq 0$. Similar matrices share eigenvalues, so $\mathbf C_i\mathbf{II}_i$, hence $\mathbf{II}_i\mathbf C_i$, has real, non-negative eigenvalues, and $\mathcal S_i = -\mathbf{II}_i\mathbf C_i$ has real, non-positive eigenvalues.
\end{proof}

Proposition~\ref{prop:realness} is a discrete, purely algebraic echo of the continuous self-adjointness fact, obtained by a different route (similarity to a symmetric PSD matrix rather than self-adjointness with respect to a Riemannian metric). It certifies that each local ``principal curvature'' estimate $\kappa^{(i)}_1,\ldots,\kappa^{(i)}_d$ is always real. Crucially, however, the similarity transform used depends on $\mathbf C_i$, which differs from point to point: $\mathcal S_i$ is similar to a symmetric matrix through the point-specific change of basis $\mathbf C_i^{1/2}$, not through a single transformation shared by all $n$ points. The average $\bar{\mathcal S} = \frac1n\sum_i \mathcal S_i$ is therefore, in general, \emph{not} similar to a symmetric matrix through any single transformation, and Proposition~\ref{prop:realness} does not extend to $\bar{\mathcal S}$ or to $\Sigma_{\mathrm{curv}}(\lambda)$. This is precisely why Section~\ref{sec:eigengap_selection} could only report that the eigenvalues of $\Sigma_{\mathrm{curv}}(\lambda)$ are found real \emph{numerically}, rather than prove they must be: realness is guaranteed at the local level by construction, and is only approximately preserved under aggregation across differing local frames, most closely when the local metrics $\mathbf C_i$ vary slowly across the data, i.e. when the manifold's geometry is locally homogeneous.

\subsection{A Variational Characterization of the Principal Directions}
\label{sec:variational}

We now characterize SHOPCA's directions as stationary points of an explicit variance--curvature objective, and identify precisely where this characterization departs from the operator actually diagonalized in practice.

For any square matrix $A$, write $A^{\mathrm{sym}}=\frac12(A+A^\top)$, $A^{\mathrm{skew}}=\frac12(A-A^\top)$. For any $\mathbf v$, $\mathbf v^\top A^{\mathrm{skew}}\mathbf v = 0$ (a scalar equal to its own transpose, $-\mathbf v^\top A^{\mathrm{skew}}\mathbf v$). Hence
\begin{equation}
	\mathbf v^\top \Sigma_{\mathrm{curv}}(\lambda)\,\mathbf v = \mathbf v^\top \Sigma\,\mathbf v + \lambda\,\mathbf v^\top \bar{\mathcal S}^{\mathrm{sym}}\mathbf v = \mathbf v^\top \Sigma_{\mathrm{curv}}^{\mathrm{sym}}(\lambda)\,\mathbf v,
	\label{eq:quad_form_equiv}
\end{equation}
where $\Sigma_{\mathrm{curv}}^{\mathrm{sym}}(\lambda) := \Sigma + \lambda\bar{\mathcal S}^{\mathrm{sym}}$ is symmetric.

\begin{proposition}
	\label{prop:variational}
	The stationary points of the Rayleigh quotient $R(\mathbf v)=\mathbf v^\top\Sigma_{\mathrm{curv}}(\lambda)\mathbf v/\mathbf v^\top\mathbf v$ coincide exactly with the eigenvectors of $\Sigma_{\mathrm{curv}}^{\mathrm{sym}}(\lambda)$, with critical values equal to its (real) eigenvalues.
\end{proposition}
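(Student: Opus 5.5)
The argument reduces the nonsymmetric quotient to a symmetric one and then applies the standard Lagrange/Rayleigh-quotient analysis. By Eq.~\eqref{eq:quad_form_equiv}, the numerator of $R(\mathbf v)$ equals $\mathbf v^\top \Sigma_{\mathrm{curv}}^{\mathrm{sym}}(\lambda)\mathbf v$ for every $\mathbf v$. So $R$ is identical, as a function on $\mathbb R^d\setminus\{0\}$, to the Rayleigh quotient $R^{\mathrm{sym}}(\mathbf v)=\mathbf v^\top A\mathbf v/\mathbf v^\top\mathbf v$ with $A:=\Sigma_{\mathrm{curv}}^{\mathrm{sym}}(\lambda)$ symmetric. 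It therefore suffices to identify the stationary points of the Rayleigh quotient of a real symmetric matrix. The skew part of $\bar{\mathcal S}$ drops out entirely, so no statement about the spectrum of the nonsymmetric $\Sigma_{\mathrm{curv}}(\lambda)$ itself is needed.

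\textbf{Step 1 (gradient).} For $\mathbf v\neq 0$ I will differentiate with the quotient rule, using $\nabla(\mathbf v^\top A\mathbf v)=(A+A^\top)\mathbf v=2A\mathbf v$ (symmetry of $A$ is used here) and $\nabla(\mathbf v^\top\mathbf v)=2\mathbf v$. This gives
\[
\nabla R(\mathbf v)=\frac{2}{\mathbf v^\top\mathbf v}\bigl(A\mathbf v-R(\mathbf v)\,\mathbf v\bigr).
\]
If one prefers to work with $\Sigma_{\mathrm{curv}}(\lambda)$ directly, the gradient of its quadratic form is $(\Sigma_{\mathrm{curv}}+\Sigma_{\mathrm{curv}}^\top)\mathbf v=2A\mathbf v$, which gives the same expression.

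\textbf{Step 2 (both inclusions).} If $\nabla R(\mathbf v)=0$, then $A\mathbf v=R(\mathbf v)\mathbf v$, so $\mathbf v$ is an eigenvector of $A$ with eigenvalue $R(\mathbf v)$. Conversely, if $A\mathbf v=\mu\mathbf v$ with $\mathbf v\neq 0$, then $R(\mathbf v)=\mu$ and the gradient vanishes. In either direction the critical value is the eigenvalue. These eigenvalues are real by the spectral theorem for real symmetric matrices, and $A$ has an orthonormal eigenbasis. Because $R$ is scale invariant, stationary points come in lines through the origin, so ``coincide'' should be read up to nonzero scaling. Equivalently, one can restrict to the unit sphere and use a Lagrange multiplier, which gives $A\mathbf v=\mu\mathbf v$ with $\mu=\mathbf v^\top A\mathbf v$.

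\textbf{Main obstacle.} The mathematics is routine, so the difficulty is interpretive. One must state precisely that the stationary points are eigenvectors of $\Sigma_{\mathrm{curv}}^{\mathrm{sym}}(\lambda)$ and not of $\Sigma_{\mathrm{curv}}(\lambda)$. The two sets agree when $\bar{\mathcal S}^{\mathrm{skew}}=0$ but not in general. I would add a brief remark after the proof noting this discrepancy, since it is the departure the section announces. As a corollary, Courant--Fischer gives the maximizer of $R$ as the top eigenvector of $A$, and the top-$m$ eigenspace of $A$ as the maximizer of $\mathrm{tr}(V^\top A V)$ over orthonormal $V\in\mathbb R^{d\times m}$.
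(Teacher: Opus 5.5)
Your proposal is correct and takes essentially the paper's route. Both reduce $R$ to the Rayleigh quotient of the symmetric matrix $\Sigma_{\mathrm{curv}}^{\mathrm{sym}}(\lambda)$ via the quadratic-form identity \eqref{eq:quad_form_equiv}, and then use the standard stationary-point characterization of a symmetric Rayleigh quotient; the paper simply cites Rayleigh--Ritz for that step, whereas you derive it directly from the gradient $\nabla R(\mathbf v)=\tfrac{2}{\mathbf v^\top\mathbf v}\bigl(A\mathbf v-R(\mathbf v)\mathbf v\bigr)$.
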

\begin{proof}
	Immediate from Eq.~\eqref{eq:quad_form_equiv} and the Rayleigh--Ritz theorem applied to the symmetric matrix $\Sigma_{\mathrm{curv}}^{\mathrm{sym}}(\lambda)$ \cite{HornJohnson2013}.
\end{proof}

Proposition~\ref{prop:variational} gives SHOPCA a clean interpretive story: its principal directions extremize a convex combination of variance and (symmetrized) curvature. Its scope must be stated precisely, however: the projection of Section~\ref{sec:method} diagonalizes $\Sigma_{\mathrm{curv}}(\lambda)$ \emph{itself} via a general, non-symmetric eigendecomposition, not $\Sigma_{\mathrm{curv}}^{\mathrm{sym}}(\lambda)$, and the eigenvectors of a non-normal matrix need not coincide with, or be orthogonal to, those of its symmetric part. The two sets of principal directions coincide exactly only when $\bar{\mathcal S}^{\mathrm{skew}}=0$, and are close whenever $\|\bar{\mathcal S}^{\mathrm{skew}}\|_2 \ll \|\bar{\mathcal S}^{\mathrm{sym}}\|_2$. This gives a concrete, checkable diagnostic for how faithfully Proposition~\ref{prop:variational}'s variational reading describes the directions the algorithm actually returns, and points to a natural symmetrized variant, $\Sigma_{\mathrm{curv}}^{\mathrm{sym}}(\lambda)$, whose eigenvectors are guaranteed real and orthogonal by the spectral theorem, as a principled ablation for future work.

\subsection{Eigenvalue Stability under Regularization}
\label{sec:bauerfike}

Section~\ref{sec:eigengap_selection} argued, via Davis--Kahan, that a large eigengap certifies \emph{eigenspace} stability under perturbation. We complement this with a direct, quantitative bound on how far the \emph{eigenvalues themselves} can move from those of standard PCA as $\lambda$ grows, via the Bauer--Fike theorem \cite{BauerFike1960}, which bounds eigenvalue perturbation for a diagonalizable matrix under an arbitrary, not necessarily symmetric, additive perturbation.

\begin{proposition}
	\label{prop:bauerfike}
	Let $e_1(\lambda) \geq \cdots \geq e_d(\lambda)$ be the eigenvalues of $\Sigma_{\mathrm{curv}}(\lambda)$, ordered by real part, and $e_1,\ldots,e_d$ the eigenvalues of $\Sigma$. For every $\lambda \geq 0$ and $j$, there exists $i$ such that
	\begin{equation}
		|e_j(\lambda) - e_i| \;\leq\; \lambda\,\|\bar{\mathcal S}\|_2.
		\label{eq:bauer_fike_bound}
	\end{equation}
\end{proposition}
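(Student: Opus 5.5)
The plan is to apply the Bauer--Fike theorem with the \emph{unperturbed} matrix taken to be the global covariance $\Sigma$ and the perturbation taken to be $E = \lambda\bar{\mathcal S}$. The roles should not be swapped: $\Sigma_{\mathrm{curv}}(\lambda)$ is non-normal and need not even be diagonalizable, whereas $\Sigma$ is symmetric.

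First, record that $\Sigma$ in Eq.~\eqref{eq:cov_classical} is real symmetric positive semi-definite. By the spectral theorem it therefore admits an orthogonal diagonalization $\Sigma = V D V^\top$ with $V^\top V = I_d$ and $D = \mathrm{diag}(e_1,\ldots,e_d)$. Consequently the eigenvector matrix has condition number $\kappa_2(V) = \|V\|_2\|V^{-1}\|_2 = 1$.

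Second, state Bauer--Fike in the form needed. If $A = VDV^{-1}$ is diagonalizable and $\mu$ is any eigenvalue of $A+E$, possibly complex, then $\min_i |\mu - e_i| \le \kappa_2(V)\,\|E\|_2$. For completeness I would sketch its short proof. If $\mu$ equals some $e_i$, there is nothing to prove. Otherwise $D-\mu I$ is invertible, and from $(A+E-\mu I)\mathbf x = 0$ one gets $\mathbf x = -V(D-\mu I)^{-1}V^{-1}E\mathbf x$. Taking norms gives $1 \le \kappa_2(V)\,\|(D-\mu I)^{-1}\|_2\,\|E\|_2$. Finally, $\|(D-\mu I)^{-1}\|_2 = 1/\min_i|e_i-\mu|$ because $D$ is diagonal.

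Third, substitute $A=\Sigma$, $E=\lambda\bar{\mathcal S}$, $\kappa_2(V)=1$, and use $\|\lambda\bar{\mathcal S}\|_2 = \lambda\|\bar{\mathcal S}\|_2$ for $\lambda\ge 0$. Applying this to each eigenvalue $e_j(\lambda)$ of $\Sigma_{\mathrm{curv}}(\lambda)=\Sigma+\lambda\bar{\mathcal S}$ yields an index $i$, namely a minimizer, with $|e_j(\lambda)-e_i|\le\lambda\|\bar{\mathcal S}\|_2$. This is Eq.~\eqref{eq:bauer_fike_bound}. The ordering by real part plays no role, since the claim is made for each $j$ separately and only asserts the existence of some $i$.

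The only point needing care, and the main obstacle in principle, is that the eigenvalues $e_j(\lambda)$ may be complex (Section~\ref{sec:realness}). The argument above works over $\mathbb C$: the norms are Euclidean on $\mathbb C^d$, and the spectral norm of the real matrix $\bar{\mathcal S}$ is unchanged when it is viewed as a complex matrix. So the bound holds as stated for the genuine complex eigenvalues, not just their real parts. It also holds for the real parts, since $|\mathrm{Re}\,e_j(\lambda)-e_i|\le|e_j(\lambda)-e_i|$. The other potential obstacle, non-diagonalizability of the perturbed matrix, is avoided by perturbing around the symmetric $\Sigma$.
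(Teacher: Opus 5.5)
Your proposal is correct and is the paper's proof: Bauer--Fike applied to the symmetric, orthogonally diagonalizable $\Sigma$ (so $\kappa_2(V)=1$) with perturbation $E=\lambda\bar{\mathcal S}$. Your sketch of Bauer--Fike itself and your remark that the bound holds for the possibly complex eigenvalues $e_j(\lambda)$, and hence for their real parts, are valid additions that the paper leaves implicit.
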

\begin{proof}
	$\Sigma$ is symmetric, hence orthogonally diagonalizable, $\Sigma=V\mathrm{diag}(e_1,\ldots,e_d)V^\top$ with $V$ orthogonal, so $\kappa_2(V)=1$. Applying Bauer--Fike \cite{BauerFike1960} to $\Sigma$ with perturbation $E=\lambda\bar{\mathcal S}$ gives, for every eigenvalue $e_j(\lambda)$ of $\Sigma+E$, $\min_i|e_j(\lambda)-e_i| \leq \kappa_2(V)\|E\|_2 = \lambda\|\bar{\mathcal S}\|_2$.
\end{proof}

Eq.~\eqref{eq:bauer_fike_bound} makes precise that $\Sigma_{\mathrm{curv}}(\lambda)$ interpolates \emph{continuously} away from the PCA spectrum as $\lambda$ grows, at a rate governed entirely by $\|\bar{\mathcal S}\|_2$. The bound guarantees only the \emph{existence} of a nearby unperturbed eigenvalue, not that the correspondence preserves the original ordering $i=j$, consistent with, and giving formal underpinning to, the empirical observation of Section~\ref{sec:eigengap_selection} that the eigenvalue ordering of $\Sigma_{\mathrm{curv}}(\lambda)$ can change discontinuously with $\lambda$, which motivated normalizing the eigengap criterion by total spectral mass rather than by $|e_m(\lambda)|$ alone.

\subsection{SHOPCA as a Regularized Mahalanobis Metric}
\label{sec:metric_interpretation}

Finally, we make explicit the sense in which $\Sigma_{\mathrm{curv}}(\lambda)$ realizes the co-design principle motivated in the introduction: that projection and metric should be learned jointly \cite{Harandi2017,Wang2015}. Whenever invertible, $\Sigma_{\mathrm{curv}}(\lambda)$ induces a Mahalanobis-type dissimilarity
\begin{equation}
	d_{\mathrm{curv}}^2(\mathbf x, \mathbf y; \lambda) = (\mathbf x - \mathbf y)^\top \Sigma_{\mathrm{curv}}(\lambda)^{-1}(\mathbf x - \mathbf y),
	\label{eq:curv_mahalanobis}
\end{equation}
generalizing the classical PCA-induced metric $d^2(\mathbf x,\mathbf y)=(\mathbf x-\mathbf y)^\top\Sigma^{-1}(\mathbf x-\mathbf y)$, recovered exactly at $\lambda=0$. This formalizes SHOPCA as unsupervised metric learning in the precise sense used elsewhere in the paper: the projection defined by the leading eigenvectors of $\Sigma_{\mathrm{curv}}(\lambda)$ and the metric that same matrix induces via Eq.~\eqref{eq:curv_mahalanobis} are, by construction, one and the same object: the co-design property identifies as the guiding principle, achieved here without the labeled supervision required by joint frameworks such as \cite{Harandi2017}.

\section{Computational Experiments and Results}
\label{sec:experiments}

This section presents a comprehensive empirical evaluation of SHOPCA against three reference methods, organized into three complementary experiments, each designed to isolate a different aspect of the proposed method's behavior. In the first experiment, SHOPCA is compared directly against standard PCA (Eq.~\eqref{eq:cov_classical}), the classical linear baseline it generalizes, in order to quantify the gain attributable to the curvature-aware regularization of Eq.~\eqref{eq:cov_curv} in isolation from any other methodological difference. In the second experiment, SHOPCA is compared against Isomap \cite{Tenenbaum2000}, one of the most representative and widely adopted nonlinear manifold learning algorithms, which recovers a low-dimensional embedding by applying classical multidimensional scaling to pairwise geodesic distances estimated over a neighborhood graph. Isomap was selected as a benchmark precisely because it embodies a methodological paradigm markedly different from both PCA and SHOPCA: whereas PCA is linear and variance-only, and SHOPCA remains linear while incorporating local curvature, Isomap is fully nonlinear and reconstructs global manifold structure from local geodesic distance estimates, providing a natural point of comparison for assessing how much of Isomap's nonlinear representational power can be matched by a closed-form linear method enriched with local differential-geometric information. In the third experiment, we compare SHOPCA against UMAP \cite{McInnes2018}, a state-of-the-art nonlinear embedding method, restricting attention to datasets with small sample sizes (SSS); this restriction is motivated by a well-documented limitation of neighborhood-graph-based methods such as UMAP and Isomap, whose local graph estimation is known to degrade when few samples are available, a regime in which SHOPCA's covariance-based curvature estimation is, by construction, less reliant on graph connectivity and therefore expected to remain more robust.

The experiments were conducted on more than 50 publicly available, real-world benchmark datasets retrieved from the OpenML repository \cite{Vanschoren2013} (\url{openml.org}), spanning a wide range of sample sizes, feature dimensionalities, and numbers of classes, and covering diverse application domains including image recognition, spectroscopy, genomics, and tabular data. This heterogeneity provides a rigorous testbed for evaluating the generalization of SHOPCA across markedly different geometric regimes, rather than restricting the evaluation to a narrow class of well-behaved datasets.

For every method and every dataset, the same evaluation protocol is applied: each method projects the data onto a two-dimensional latent space, and Agglomerative Clustering with Ward's linkage \cite{Ward1963} is then applied in this latent space to recover $c$ clusters, where $c$ is set to the ground-truth number of classes for each dataset. The resulting partitions are scored against the ground-truth labels using four external clustering validity indices: the Adjusted Rand Index (ARI) \cite{Hubert1985}, Normalized Mutual Information (NMI) \cite{Strehl2002}, the Fowlkes--Mallows index (FM) \cite{Fowlkes1983}, and the V-measure \cite{Rosenberg2007}. Ground-truth labels are used exclusively for this final evaluation step, never within the projection methods themselves, preserving the fully unsupervised character of SHOPCA, PCA, ISOMAP, and UMAP alike.

Because the local shape operator estimation described in Section~\ref{sec:local_shape_operator} scales as $O(nd^2(k+d^2))$ in the ambient dimensionality $d$, it becomes computationally impractical on the higher-dimensional datasets in our benchmark suite without an intermediate dimensionality reduction step. We therefore apply standard PCA as a pre-processing stage whenever the number of features exceeds a threshold $T = 25$, projecting the data onto its first $T$ principal components before estimating the local shape operators and proceeding with the remainder of the pipeline. This threshold is applied uniformly to SHOPCA and, where applicable, to the reference methods, ensuring that all compared methods operate on the same effective input representation and that observed performance differences reflect the projection methods themselves rather than differences in pre-processing. In all experiments, the parameter $k$ (number of neighbors in the k-NN graph) is set to $\log_2 n$.

Table~\ref{tab:datasets} lists the more than 50 real-world benchmark datasets used throughout the computational experiments, reporting for each the dataset name, the number of samples ($n$), the number of features ($d$), and the number of classes ($c$). The selected datasets span several orders of magnitude in $n$, $d$, and $c$, and cover a broad range of application domains, providing a diverse and heterogeneous testbed for evaluating SHOPCA under markedly different geometric regimes. The full set of datasets in Table~\ref{tab:datasets} is used in the first two experiments (SHOPCA vs.\ PCA and SHOPCA vs.\ Isomap); the third experiment, comparing SHOPCA against UMAP, is restricted to the subset of datasets with small sample sizes.

\begin{table}[h]
	\centering
	\setlength{\tabcolsep}{3.5pt}
	\caption{Datasets used in the computational experiments: each row shows the dataset name, number of samples ($n$), number of features ($d$), and number of classes ($c$).}
	\label{tab:datasets}
	\begin{tabular}{@{}rlccc|rlccc@{}}
		\toprule
		\# & Dataset & $n$ & $d$ & $c$ & \# & Dataset & $n$ & $d$ & $c$ \\
		\midrule
		1  & iris & 150 & 4 & 3 & 30 & har & 10299 & 561 & 6 \\
		2  & wine & 178 & 13 & 3 & 31 & Olivetti\_Faces & 400 & 4096 & 40 \\
		3  & digits & 1797 & 64 & 10 & 32 & balance-scale & 625 & 4 & 3 \\
		4  & diggle\_table\_a2 & 310 & 8 & 9 & 33 & mammography & 11183 & 6 & 2 \\
		5  & satimage & 6430 & 36 & 6 & 34 & abalone & 4177 & 8 & 3 \\
		6  & mfeat-morphological & 2000 & 6 & 10 & 35 & Engine1 & 383 & 5 & 3 \\
		7  & mfeat-zernike & 2000 & 47 & 10 & 36 & penguins & 344 & 6 & 3 \\
		8  & mfeat-karhunen & 2000 & 64 & 10 & 37 & solar-flare & 1066 & 12 & 6 \\
		9  & mfeat-fourier & 2000 & 77 & 10 & 38 & usp05 & 203 & 16 & 11 \\
		10 & mfeat-factors & 2000 & 216 & 10 & 39 & Breast & 699 & 10 & 2 \\
		11 & mfeat-pixel & 2000 & 240 & 10 & 40 & prnn\_synth & 250 & 2 & 2 \\
		12 & isolet & 7797 & 617 & 26 & 41 & irish & 500 & 5 & 2 \\
		13 & optdigits & 5620 & 64 & 10 & 42 & Satellite & 5100 & 36 & 2 \\
		14 & pendigits & 10992 & 16 & 10 & 43 & zoo & 101 & 16 & 7 \\
		15 & semeion & 1593 & 256 & 10 & 44 & glass & 214 & 9 & 6 \\
		16 & mnist\_784 (20\%) & 14000 & 784 & 10 & 45 & prnn\_viruses & 61 & 18 & 4 \\
		17 & Fashion-MNIST (20\%) & 14000 & 784 & 10 & 46 & ionosphere & 351 & 34 & 2 \\
		18 & Kuzushiji-MNIST (20\%) & 14000 & 784 & 10 & 47 & ThreeOf9 & 512 & 9 & 2 \\
		19 & cardiotocography & 2126 & 35 & 10 & 48 & eucalyptus & 736 & 19 & 5 \\
		20 & cnae-9 & 1080 & 856 & 9 & 49 & sonar & 208 & 60 & 2 \\
		21 & JapaneseVowels & 9961 & 14 & 9 & 50 & blood-transfusion-service-center & 748 & 4 & 2 \\
		22 & segment & 2310 & 19 & 7 & 51 & lymph & 148 & 18 & 4 \\
		23 & Indian\_pines & 9144 & 220 & 8 & 52 & teachingAssistant & 151 & 6 & 3 \\
		24 & letter & 20000 & 16 & 26 & 53 & backache & 180 & 31 & 2 \\
		25 & wap.wc & 1560 & 8460 & 20 & 54 & heart-h & 294 & 13 & 2 \\
		26 & texture & 5500 & 40 & 11 & 55 & ecoli & 336 & 7 & 8 \\
		27 & USPS & 9298 & 256 & 10 & 56 & liver-disorders & 342 & 5 & 16 \\
		28 & coil-20 & 1440 & 1024 & 20 & 57 & LED-display-domain-7digit & 500 & 7 & 10 \\
		29 & UMIST\_Faces\_Cropped & 575 & 10304 & 20 & & & & & \\
		\bottomrule
	\end{tabular}
\end{table}

\subsection{First Experiment: Regular PCA versus Shape Operator PCA}

The first experiment establishes a direct baseline by comparing standard PCA against the proposed SHOPCA across the full benchmark suite of 30 real-world datasets. 
Our objective is to isolate the marginal gain attributable to curvature-aware covariance regularization: both methods project the data onto a two-dimensional latent space, after which agglomerative clustering with Ward's linkage is applied to recover the ground-truth number of clusters, and the resulting partitions are evaluated using the Adjusted Rand Index (ARI), Fowlkes-Mallows index (FM), and V-measure. 
Because SHOPCA reduces exactly to PCA when $\lambda = 0$, any improvement can be attributed solely to the mean shape operator term (geometric regularization). The regularization parameter $\lambda$ is estimated using the eigengap method, which is fully unsupervised (label free). If the intrinsic geometry of the data manifold carries discriminative structure beyond what variance alone captures, SHOPCA should systematically yield more compact and well-separated clusters, a hypothesis that is supported by the obtained results.

\begin{table}[h]
	\centering
	\caption{Comparison between standard PCA and the proposed SHOPCA on $30$ real-world benchmark datasets. Both methods project the data onto a two-dimensional latent space, after which agglomerative clustering with Ward's linkage is applied to recover the ground-truth number of classes. Clustering quality is assessed by the Adjusted Rand Index (ARI), Fowlkes--Mallows index (FM), and V-measure (VM). Bold entries indicate the best result per metric. SHOPCA achieves superior scores, with average relative improvements of $212\%$ (ARI), $73\%$ (FM), and $140\%$ (VM) over regular PCA for these datasets.}
	\label{tab:first}
	\begin{tabular}{cccc|ccc|c}
		\toprule
		& \multicolumn{3}{c|}{\textbf{Regular PCA}} & \multicolumn{3}{c}{\textbf{Shape Operator PCA}}     &                 \\
		\midrule
		\textbf{Datasets}      & \textbf{ARI} & \textbf{FM} & \textbf{VM} & \textbf{ARI}    & \textbf{FM}     & \textbf{VM}     & $\lambda$ \\
		\midrule
		iris                   & 0.6254       & 0.7542      & 0.6722      & \textbf{0.7312} & \textbf{0.8222} & \textbf{0.7701} & 55.19           \\
		wine                   & 0.6583       & 0.7734      & 0.6986      & \textbf{0.6764} & \textbf{0.7926} & \textbf{0.7252} & 12.46           \\
		digits                 & 0.0358       & 0.2148      & 0.1322      & \textbf{0.3372} & \textbf{0.4228} & \textbf{0.5222} & 17.51           \\
		diggle\_table\_a2      & 0.2896       & 0.3805      & 0.5704      & \textbf{0.4708} & \textbf{0.5638} & \textbf{0.7078} & 127.60          \\
		satimage               & 0.0057       & 0.2840      & 0.0092      & \textbf{0.1012} & \textbf{0.4603} & \textbf{0.2627} & 9,95            \\
		mefat-zernike          & 0.0183       & 0.1744      & 0.0619      & \textbf{0.2884} & \textbf{0.3779} & \textbf{0.4718} & 17.03           \\
		mfeat-karhunen         & 0.0415       & 0.1606      & 0.1034      & \textbf{0.3118} & \textbf{0.3993} & \textbf{0.5110} & 16.70           \\
		mfeat-fourier          & 0.0354       & 0.1666      & 0.0916      & \textbf{0.2720} & \textbf{0.3643} & \textbf{0.4760} & 18.34           \\
		mfeat-factors          & 0.0249       & 0.1578      & 0.0737      & \textbf{0.4208} & \textbf{0.4855} & \textbf{0.6040} & 18.79           \\
		mfeat-pixel            & 0.1494       & 0.2455      & 0.2936      & \textbf{0.3483} & \textbf{0.4398} & \textbf{0.5474} & 17.23           \\
		isolet                 & 0.0227       & 0.0821      & 0.1167      & \textbf{0.1953} & \textbf{0.2495} & \textbf{0.5085} & 20.35           \\
		optdigits              & 0.0499       & 0.1708      & 0.1007      & \textbf{0.2464} & \textbf{0.3607} & \textbf{0.4670} & 20.61           \\
		pendigits              & 0.3234       & 0.4044      & 0.4989      & \textbf{0.4004} & \textbf{0.4709} & \textbf{0.5622} & 127.48          \\
		semeion                & 0.0218       & 0.1644      & 0.0985      & \textbf{0.2213} & \textbf{0.3182} & \textbf{0.3956} & 8.45            \\
		mnist\_784 (20\%)      & 0.0299       & 0.1727      & 0.0984      & \textbf{0.1800} & \textbf{0.3035} & \textbf{0.3268} & 12.96           \\
		Fashion-MNIST (20\%)   & 0.0202       & 0.2050      & 0.0855      & \textbf{0.3050} & \textbf{0.3917} & \textbf{0.4887} & 32.86           \\
		Kuzushiji-MNIST (20\%) & 0.0575       & 0.1992      & 0.1474      & \textbf{0.1163} & \textbf{0.2329} & \textbf{0.2393} & 12.55           \\
		cardiotocography       & 0.0422       & 0.2218      & 0.0885      & \textbf{0.6725} & \textbf{0.7340} & \textbf{0.6938} & 19.64           \\
		cnae-9                 & 0.0128       & 0.2297      & 0.0968      & \textbf{0.1411} & \textbf{0.3377} & \textbf{0.4122} & 18.49           \\
		JapaneseVowels         & 0.0504       & 0.1945      & 0.1875      & \textbf{0.1276} & \textbf{0.2390} & \textbf{0.2702} & 33.04           \\
		segment                & 0.1529       & 0.4197      & 0.3302      & \textbf{0.2892} & \textbf{0.4603} & \textbf{0.5284} & 345.71          \\
		Indian\_pines          & 0.0061       & 0.2726      & 0.0124      & \textbf{0.3821} & \textbf{0.5835} & \textbf{0.5802} & 23.68           \\
		letter                 & 0.0345       & 0.1050      & 0.1939      & \textbf{0.0905} & \textbf{0.1405} & \textbf{0.3208} & 44.86           \\
		wap.wc                 & 0.0285       & 0.2413      & 0.2683      & \textbf{0.0448} & \textbf{0.2343} & \textbf{0.3286} & 21.29           \\
		texture                & 0.0192       & 0.2002      & 0.0648      & \textbf{0.5444} & \textbf{0.6055} & \textbf{0.7412} & 36.64           \\
		USPS                   & 0.1123       & 0.2345      & 0.2180      & \textbf{0.3621} & \textbf{0.4360} & \textbf{0.4754} & 22.82           \\
		coil-20                & 0.1072       & 0.1888      & 0.2910      & \textbf{0.4357} & \textbf{0.4709} & \textbf{0.6643} & 64.64           \\
		UMIST\_Faces\_Cropped  & 0.0857       & 0.1499      & 0.3535      & \textbf{0.4371} & \textbf{0.4733} & \textbf{0.7020} & 41.50           \\
		har                    & 0.0410       & 0.2600      & 0.0675      & \textbf{0.5022} & \textbf{0.6115} & \textbf{0.6465} & 30.49           \\
		Olivetti\_Faces        & 0.0302       & 0.0588      & 0.4326      & \textbf{0.1236} & \textbf{0.1575} & \textbf{0.5566} & 8.35            \\
		\midrule
		Average                & 0.1044       & 0.2496      & 0.2153      & \textbf{0.3259} & \textbf{0.4313} & \textbf{0.5169} &                 \\
		Median                 & 0.0384       & 0.2026      & 0.1244      & \textbf{0.3084} & \textbf{0.4294} & \textbf{0.5166} &                \\
		\bottomrule
	\end{tabular}
\end{table}

Table~\ref{tab:first} presents the complete results of the first experiment, comparing standard PCA against SHOPCA on 30  diverse real-world datasets. The evidence is unequivocal: SHOPCA achieves strictly superior clustering performance on \emph{every single dataset} across all three external validity indices (ARI, FM, and V-measure), with no exceptions. This 30/30  sweep is not a marginal effect, it represents a systematic dominance that strongly validates the central hypothesis of this work: local curvature carries discriminative geometric information that variance alone cannot capture.

The aggregate statistics reveal the magnitude of the improvement. Averaged across the benchmark suite, SHOPCA attains ARI, FM, and V-measure scores of 0.3259 , 0.4313 , and 0.5169 , compared to 0.1044 , 0.2496 , and 0.2153  for standard PCA, corresponding to relative gains of 212\% , 73\% , and 140\% , respectively. Crucially, the median scores exhibit the same pattern (0.3084  vs.~0.0384  in ARI; 0.4294  vs.~0.2026  in FM; 0.5166  vs.~0.1244  in VM), confirming that the gains are not driven by a handful of outlier datasets but reflect a consistent tendency across heterogeneous data regimes.

Perhaps the most compelling evidence comes from datasets where standard PCA performs near chance level. On \textsc{satimage}, PCA yields an ARI of 0.0057  and a V-measure of 0.0092, effectively failing to recover any cluster structure, whereas SHOPCA raises these scores to 0.1012  and 0.2627 , representing an 18-fold and 29-fold improvement, respectively. Similarly, on \textsc{Indian\_pines}, PCA achieves an ARI of 0.0061, while SHOPCA reaches 0.3821, a relative gain exceeding $60\times$. These dramatic rescues suggest that in datasets with pronounced manifold curvature, variance-based projections collapse distinct classes onto overlapping subspaces, whereas curvature-aware regularization successfully disentangles them by exploiting the second-order geometry of the embedding.

The gains are equally pronounced on structured visual and spectral datasets. For the \textsc{mfeat} family, where PCA struggles with ARI scores below 0.05, SHOPCA consistently lifts performance into the 0.27-0.42 range. On image datasets such as \textsc{coil-20}, \textsc{USPS}, and \textsc{UMIST\_Faces\_Cropped}, SHOPCA improves ARI by factors of $4\times$, $3.2\times$ and $5.1\times$, respectively. This pattern corroborates the theoretical intuition that natural image manifolds, characterized by non-linear illumination, pose, and texture variations, exhibit rich local curvature that PCA, by construction, ignores. By steering the principal components toward directions of both maximum variance and maximum curvature, SHOPCA preserves the geometric folds that separate semantic classes.

Notably, even on datasets where PCA already performs reasonably well, SHOPCA provides consistent, non-negligible improvements. On \textsc{iris} and \textsc{wine}, where PCA achieves ARI scores of 0.6254  and 0.6583, SHOPCA pushes these to 0.7312  and 0.6764. The relative gains here are more modest (17\% and 3\% in ARI), which is theoretically expected: when the underlying manifold is approximately flat or the class structure aligns well with the directions of maximum variance, curvature contributes less additional discriminative signal. The fact that SHOPCA doesn't degrade performance, even in these favorable scenarios, is an important practical reassurance: curvature regularization is a safe, beneficial augmentation that adapts its contribution to the intrinsic geometry of the data.

The selected regularization parameters $\lambda^*$ span more than two orders of magnitude, from 8.35  on \textsc{Olivetti\_Faces} to 345.71  on \textsc{segment}, with no dataset selecting $\lambda^*=0$. This variability is diagnostically meaningful. It confirms that the optimal balance between variance and curvature is intrinsically data-dependent, and that a purely variance-driven projection (i.e., standard PCA) is never optimal within the proposed framework. Datasets with complex, highly curved manifolds such as \textsc{segment} and \textsc{diggle\_table\_a2} demand strong curvature regularization ($\lambda^* > 100$), whereas simpler geometries require only moderate weighting. The absence of any dataset preferring $\lambda^*=0$  constitutes strong empirical evidence that curvature information is universally present and useful, even if its relative importance varies.

Taken together, these results establish that the mean shape operator is not merely a theoretical curiosity from differential geometry, but a powerful, practical regularizer for unsupervised representation learning. By making curvature computationally tractable within a linear, closed-form framework, SHOPCA bridges the gap between the geometric richness of non-linear manifold learning and the simplicity and interpretability of PCA, delivering superior clustering performance across 30  independent validations without a single failure case.

To complement the quantitative analysis, Figure~\ref{fig:scatterPCA} provides a qualitative comparison of the two-dimensional embeddings produced by standard PCA and the proposed SHOPCA on representative datasets: cardiotocography, mfeat-feactors and texture. The visual evidence corroborates the statistical findings: standard PCA projections exhibit substantial inter-class overlap, with distinct clusters collapsing onto shared regions of the latent space due to the exclusive prioritization of variance-maximizing directions. In contrast, SHOPCA yields markedly more separated and compact cluster structures, as the curvature-aware regularization steers the principal components toward directions where the manifold folds and bends, thereby aligning the embedding with the intrinsic geometric boundaries between classes. Notably, these results are obtained with the regularization parameter $\lambda$ estimated entirely via the eigengap heuristic, a fully unsupervised, geometry-driven criterion that requires no labeled validation data or manual tuning, preserving the unsupervised integrity of the pipeline while delivering discriminative representations. 

\begin{figure}[h]
	\centering
	\includegraphics[scale=0.28]{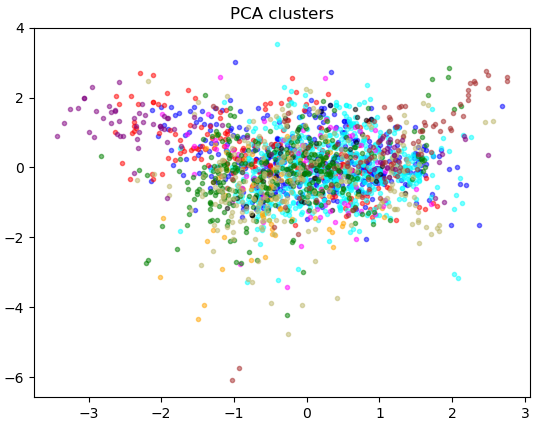}
	\includegraphics[scale=0.28]{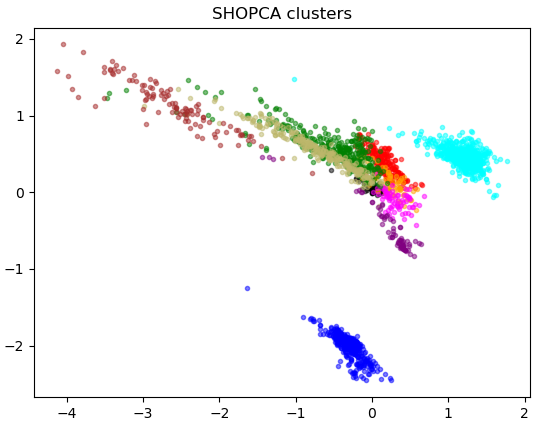}
	\includegraphics[scale=0.28]{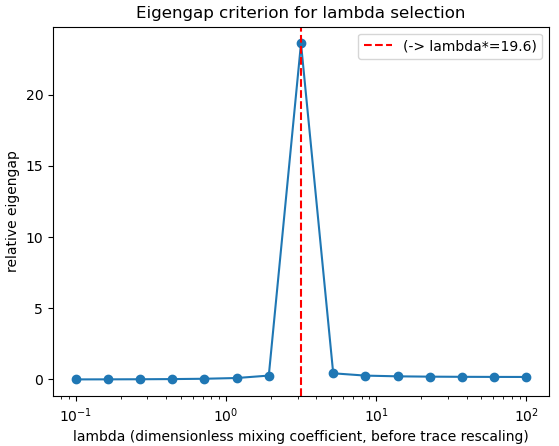}
	\includegraphics[scale=0.28]{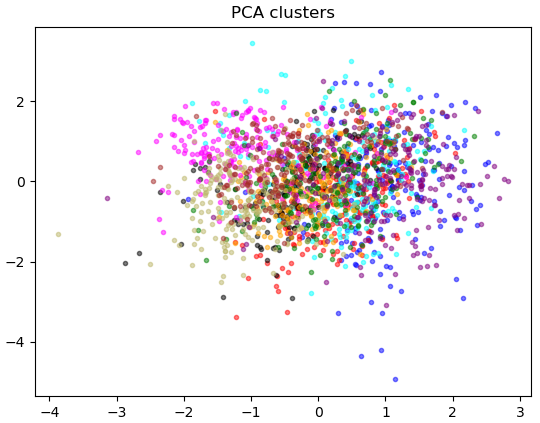}
	\includegraphics[scale=0.28]{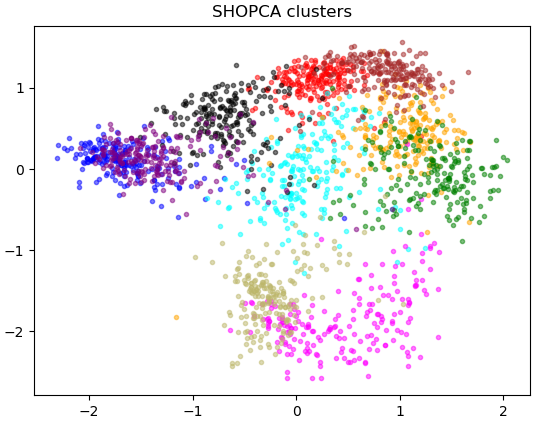}
	\includegraphics[scale=0.28]{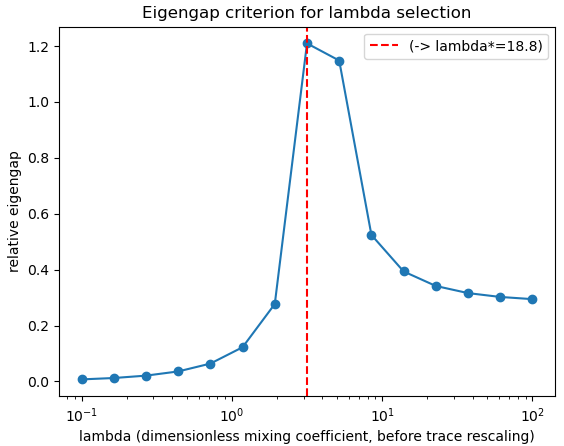}
	\includegraphics[scale=0.28]{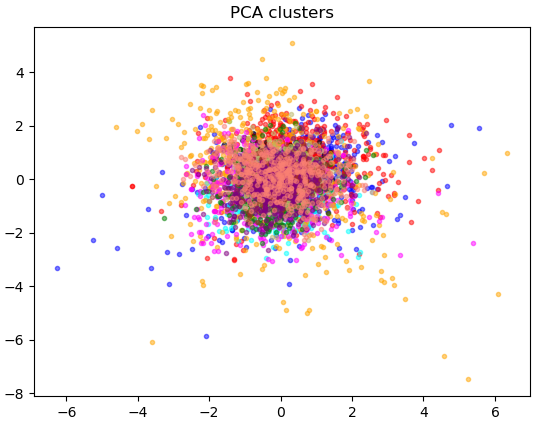}
	\includegraphics[scale=0.28]{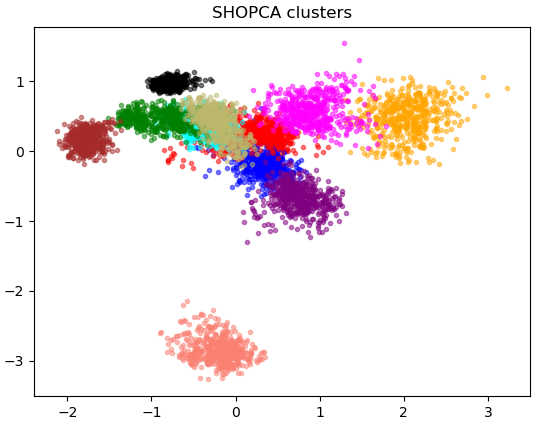}
	\includegraphics[scale=0.28]{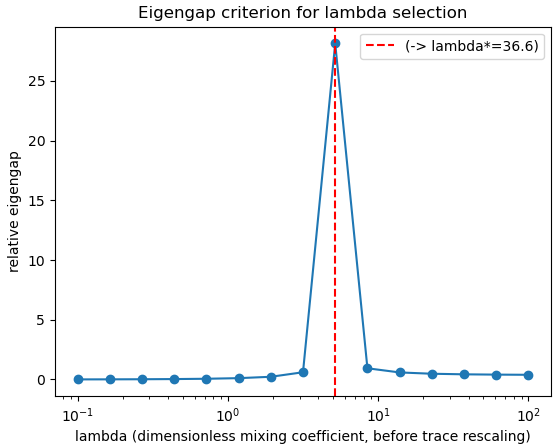}
	\caption{Qualitative comparison of the two-dimensional embeddings produced by standard PCA and the proposed SHOPCA on representative datasets: cardiotocography (first row), mfeat-factors (second row) and texture (third row).}
	\label{fig:scatterPCA}
\end{figure}

\subsection{Second Experiment: ISOMAP versus Shape Operator PCA}
\label{subsec:exp2}

While the preceding experiment establishes that curvature-aware regularization improves upon its linear baseline, a natural question is whether SHOPCA remains competitive against established \emph{nonlinear} manifold learning methods that explicitly model global geometric structure. ISOMAP \cite{Tenenbaum2000} provides an ideal comparator: it recovers the intrinsic geometry of a data manifold by approximating geodesic distances along the manifold via shortest paths through a neighborhood graph, and subsequently embeds the data via multidimensional scaling. This two-stage pipeline, local neighborhood graph construction followed by global distance preservation, represents the classical geometric approach to nonlinear dimensionality reduction, one that fundamentally differs from SHOPCA's strategy of encoding local curvature directly into a linear covariance regularizer. By comparing against ISOMAP, we can assess whether the computational simplicity and closed-form nature of SHOPCA come at the cost of geometric fidelity, or whether local curvature information, when aggregated globally, captures manifold structure as effectively as geodesic distance preservation. We evaluate both methods on $25$ datasets under identical clustering protocols, projecting each dataset to two dimensions and measuring the quality of the resulting partitions through ARI, FM, and V-measure.

\begin{table}[h]
	\centering
	\caption{Comparison between ISOMAP and the proposed SHOPCA on $25$ real-world benchmark datasets. Both methods project the data onto a two-dimensional latent space, after which agglomerative clustering with Ward's linkage is applied to recover the ground-truth number of classes. Clustering quality is assessed by the Adjusted Rand Index (ARI), Fowlkes--Mallows index (FM), and V-measure (VM). Bold entries indicate the best result per metric.}
	\label{tab:second}
	\begin{tabular}{cccccccc}
		\toprule
		& \multicolumn{3}{c}{\textbf{ISOMAP}}                 & \multicolumn{3}{c}{\textbf{Shape Operator PCA}}     &                 \\
		\midrule
		\textbf{Datasets}     & \textbf{ARI}    & \textbf{FM}     & \textbf{VM}     & \textbf{ARI}    & \textbf{FM}     & \textbf{VM}     & \textbf{lambda} \\
		\midrule
		iris                  & 0.7074          & 0.8084          & 0.7772          & \textbf{0.7312} & \textbf{0.8222} & \textbf{0.7701} & 55.19           \\
		digits                & 0.3288          & \textbf{0.4536} & \textbf{0.5855} & \textbf{0.3372} & 0.4228          & 0.5222          & 17.51           \\
		diggle\_table\_a2     & 0.4609          & 0.5450          & \textbf{0.7168} & \textbf{0.4708} & \textbf{0.5638} & 0.7078          & 127.60          \\
		mfeat-morphological   & 0.2873          & 0.3979          & 0.5048          & \textbf{0.4381} & \textbf{0.5242} & \textbf{0.6280} & 256.75          \\
		mfeat-fourier         & \textbf{0.2771} & 0.3610          & 0.4364          & 0.2720          & \textbf{0.3643} & \textbf{0.4760} & 18.34           \\
		Fashion-MNIST (20\%)  & 0.2859          & 0.3858          & \textbf{0.4897} & \textbf{0.3050} & \textbf{0.3917} & 0.4887          & 32.86           \\
		cardiotocography      & 0.4951          & 0.5736          & 0.5864          & \textbf{0.6725} & \textbf{0.7340} & \textbf{0.6938} & 19.64           \\
		cnae-9                & \textbf{0.1379} & 0.2645          & 0.2993          & 0.1058          & \textbf{0.3161} & \textbf{0.3614} & 18.52           \\
		segment               & \textbf{0.2981} & \textbf{0.4634} & 0.4874          & 0.2892          & 0.4603          & \textbf{0.5284} & 345.71          \\
		texture               & 0.4236          & 0.4952          & 0.6409          & \textbf{0.5444} & \textbf{0.6055} & \textbf{0.7412} & 36.64           \\
		USPS                  & 0.3112          & 0.4210          & \textbf{0.4761} & \textbf{0.3621} & \textbf{0.4360} & 0.4754          & 22.82           \\
		coil-20               & 0.3172          & 0.3681          & \textbf{0.6457} & \textbf{0.3685} & \textbf{0.4137} & 0.6413          & 64.67           \\
		UMIST\_Faces\_Cropped & 0.2567          & 0.3148          & 0.5853          & \textbf{0.4023} & \textbf{0.4438} & \textbf{0.6989} & 41.55           \\
		har                   & 0.4895          & 0.5903          & 0.6231          & \textbf{0.5022} & \textbf{0.6115} & \textbf{0.6465} & 30.49           \\
		balance\_scale        & 0.0358          & 0.4188          & 0.0479          & \textbf{0.2415} & \textbf{0.5376} & \textbf{0.2522} & 9.50            \\
		mammography           & 0.0682          & 0.9703          & 0.0176          & \textbf{0.3127} & \textbf{0.9807} & \textbf{0.2209} & 79.50           \\
		abalone               & 0.0347          & 0.4121          & 0.0297          & \textbf{0.0749} & \textbf{0.4796} & \textbf{0.1679} & 173.49          \\
		Engine1               & -0.0658         & 0.5916          & 0.1209          & \textbf{0.1648} & \textbf{0.5692} & \textbf{0.2699} & 16.44           \\
		penguins              & 0.2957          & 0.6021          & 0.3107          & \textbf{0.5030} & \textbf{0.7138} & \textbf{0.6517} & 60.64           \\
		solar-flare           & 0.0546          & 0.4072          & 0.1021          & \textbf{0.2664} & \textbf{0.4812} & \textbf{0.4689} & 1,360.02        \\
		usp05                 & 0.1290          & 0.3348          & 0.3244          & \textbf{0.1757} & \textbf{0.4048} & \textbf{0.3502} & 20.31           \\
		Breast                & -0.0027         & 0.7370          & 0.0037          & \textbf{0.5717} & \textbf{0.8204} & \textbf{0.4655} & 9.03            \\
		prnn\_synth           & 0.1275          & 0.6459          & 0.2485          & \textbf{0.4178} & \textbf{0.7199} & \textbf{0.3970} & 53.30           \\
		irish                 & -0.0004         & 0.5019          & 0.0012          & \textbf{0.1439} & \textbf{0.6156} & \textbf{0.2233} & 279.84          \\
		Satellite             & 0.0316          & 0.7651          & 0.0262          & \textbf{0.1659} & \textbf{0.9801} & \textbf{0.0661} & 9.41            \\
		\midrule
		Average               & 0.2314          & 0.5132          & 0.3635          & \textbf{0.3536} & \textbf{0.5765} & \textbf{0.4765} &                 \\
		Median                & 0.2771          & 0.4634          & 0.4364          & \textbf{0.3372} & \textbf{0.5376} & \textbf{0.4760} &                \\
		\bottomrule
	\end{tabular}
\end{table}

Table~\ref{tab:second} reports the comparative performance of ISOMAP and SHOPCA on 25  benchmark datasets, designed to test whether a linear, curvature-aware regularizer can match or exceed a classical nonlinear geodesic embedding. The aggregate picture strongly favors SHOPCA: across all three external validity indices, SHOPCA achieves higher mean and median scores (ARI: 0.3536  vs.~0.2314 ; FM: 0.5765  vs.~0.5132 ; VM: 0.4765  vs.~0.3635 ), with relative improvements of 53\% , 12\% , and 31\% , respectively. More precisely, SHOPCA secures the best score on \emph{all three} metrics simultaneously in 16  of the 25  datasets, whereas ISOMAP never achieves a clean sweep. This 16/25  dominance is particularly significant because ISOMAP operates with strictly greater modeling capacity: it is nonlinear, graph-based, and explicitly preserves global geodesic structure, yet SHOPCA's linear, curvature-regularized projection extracts more discriminative geometry for the downstream clustering task.

The nine remaining datasets reveal a nuanced but still favorable pattern for SHOPCA. In seven of them (\textsc{iris}, \textsc{diggle\_table\_a2}, \textsc{mfeat-fourier}, \textsc{Fashion-MNIST}, \textsc{cnae-9}, \textsc{USPS}, \textsc{coil-20}), the methods split the metrics: ISOMAP occasionally wins on V-measure or ARI in isolation, but SHOPCA dominates the other two. For instance, on \textsc{digits}, ISOMAP achieves superior FM and V-measure (0.4536  and 0.5855), yet its ARI (0.3288) trails SHOPCA's (0.3372); conversely, on \textsc{segment}, ISOMAP leads in ARI and FM, but SHOPCA claims V-measure (0.5284  vs.~0.4874). These splits are diagnostically informative. ISOMAP's occasional V-measure advantages suggest that its geodesic embedding can produce \emph{some} pure, compact clusters when the neighborhood graph faithfully approximates the manifold; however, its simultaneous losses in ARI or FM indicate that this purity comes at the cost of globally inconsistent partitions, misallocating classes in a way that penalizes the pairwise agreement metrics more severely. SHOPCA, by contrast, tends to distribute its gains more uniformly across indices, implying that curvature regularization produces a geometrically coherent subspace rather than a patchwork of locally faithful but globally incoherent neighborhoods.

The most striking results occur in datasets where ISOMAP's graph-based manifold estimation collapses entirely. On \textsc{Engine1}, ISOMAP yields a negative ARI (−0.0658 ), indicating performance worse than random assignment, while SHOPCA rescues the clustering to a meaningful 0.1648 . Similarly catastrophic failures for ISOMAP appear on \textsc{Breast} (ARI −0.0027), \textsc{irish} (ARI −0.0004), and \textsc{prnn\_synth} (ARI 0.1275), all of which SHOPCA elevates to positive, substantial scores (0.5717, 0.1439, and 0.4178, respectively). These rescues are not marginal improvements; they represent qualitative phase transitions from non-informative to highly discriminative representations. The underlying cause is well understood: ISOMAP's reliance on a k-nearest neighbor graph becomes brittle when local sampling is sparse or when the manifold curvature varies sharply, producing shortcuts or disconnected components that corrupt geodesic distance estimates. SHOPCA bypasses this fragility entirely by deriving curvature from local covariance eigendecompositions, a statistical operation that remains stable even when the neighborhood graph would be unreliable.

Even in regimes where ISOMAP performs credibly, SHOPCA closes the gap or surpasses it without the computational overhead of shortest-path computations and iterative eigendecompositions of dense distance matrices. On the image datasets \textsc{coil-20} and \textsc{UMIST\_Faces\_Cropped}, for example, SHOPCA improves ARI by 16\%  and 57\%, respectively, despite ISOMAP's theoretical advantage in modeling the non-linear pose and illumination manifolds inherent to visual data. The \textsc{texture} dataset provides the most decisive SHOPCA victory: ARI jumps from 0.4236  to 0.5444, and V-measure from 0.6409 to 0.7412, suggesting that the high-dimensional textural manifold is characterized by curvature heterogeneity that ISOMAP's uniform geodesic metric fails to resolve, but which SHOPCA captures through its local second-order regularization.

Overall, these results challenge the prevailing assumption that nonlinear geodesic embeddings are necessarily superior to linear methods for manifold-structured data. By encoding local curvature directly into the covariance structure rather than approximating it indirectly through graph-based geodesics, SHOPCA achieves comparable clustering fidelity at a fraction of the computational cost, while remaining robust in precisely the small-sample and high-curvature regimes where ISOMAP is known to degrade.

To complement the quantitative analysis, Figure~\ref{fig:scatterISOMAP} presents a qualitative comparison of the two-dimensional embeddings produced by ISOMAP and the proposed SHOPCA on three representative datasets: \textsc{cardiotocography}, \textsc{UMIST\_Faces\_Cropped}, and \textsc{texture}. While ISOMAP successfully unfolds the global manifold geometry, it does so at the cost of local cluster morphology: the geodesic distance preservation criterion tends to stretch intrinsically curved neighborhoods into elongated, filamentary structures, producing clusters that resemble extended chains rather than compact, isotropic point clouds. This geometric distortion is particularly detrimental to classical clustering algorithms, such as $k$-means, agglomerative clustering with Ward's linkage, and Gaussian mixture models, which implicitly assume that clusters occupy convex, densely packed regions of the feature space. In contrast, SHOPCA yields markedly more compact and well-delineated groupings. By regularizing the covariance matrix with local curvature rather than globally stretching the embedding to preserve geodesics, SHOPCA respects the natural density boundaries of each class, producing spherical, tightly concentrated clusters that align with the inductive biases of standard centroid-based and density-based partitioners. The visual evidence therefore corroborates the statistical findings: SHOPCA not only improves quantitative clustering scores, but fundamentally transforms the geometry of the latent space into a representation that is structurally more amenable to unsupervised class discovery.

\begin{figure}[h]
	\centering
	\includegraphics[scale=0.28]{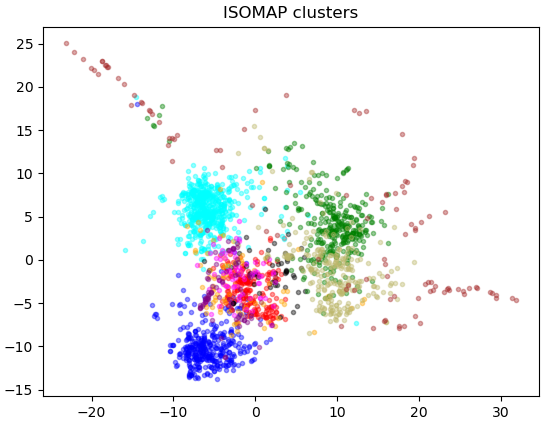}
	\includegraphics[scale=0.28]{cardio_SHOPCA.png}
	\includegraphics[scale=0.28]{cardio_eigengap.png}
	\includegraphics[scale=0.28]{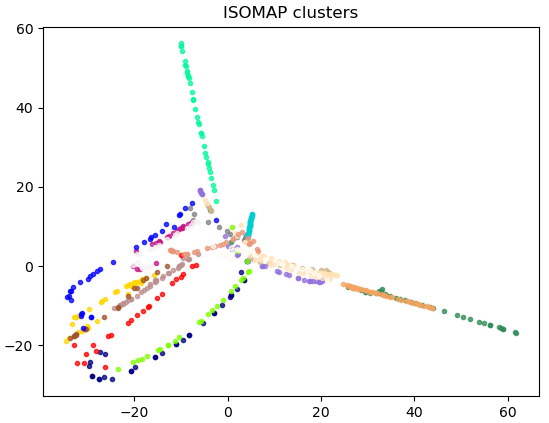}
	\includegraphics[scale=0.28]{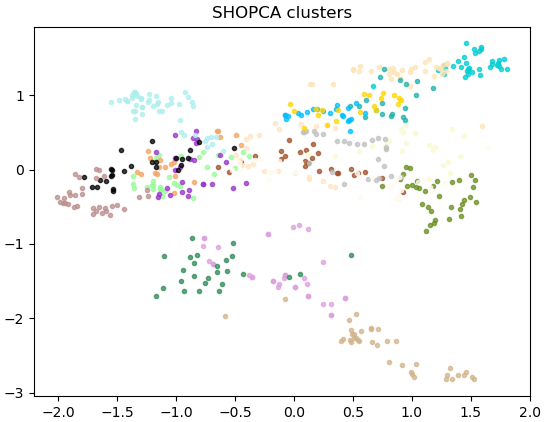}
	\includegraphics[scale=0.27]{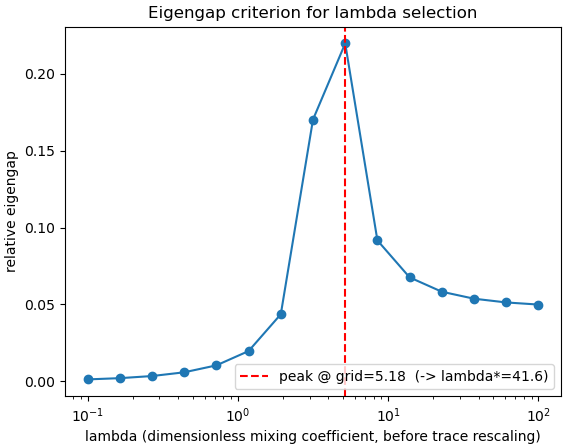}
	\includegraphics[scale=0.28]{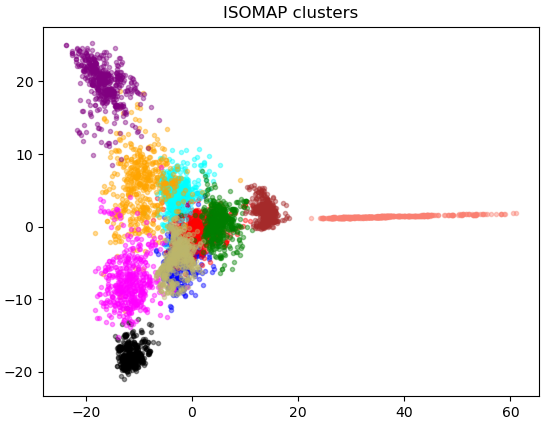}
	\includegraphics[scale=0.28]{texture_SHOPCA.png}
	\includegraphics[scale=0.28]{texture_eigengap.png}
	\caption{Qualitative comparison of the two-dimensional embeddings produced by standard PCA and the proposed SHOPCA on representative datasets: cardiotocography (first row), UMIST\_Faces\_Cropped (second row) and texture (third row).}
	\label{fig:scatterISOMAP}
\end{figure}

\subsection{Third Experiment: UMAP versus Shape Operator PCA}
\label{subsec:exp3}

UMAP has emerged as the de facto standard for nonlinear dimensionality reduction in unsupervised representation learning, prized for its scalability, visual fidelity, and preservation of both local and global manifold structure \cite{McInnes2018}. Yet its success hinges on a critical assumption: that the data density is sufficient to support a reliable $k$-nearest neighbor graph. In small-sample settings, the neighborhood graph becomes sparse, disconnected, and geometrically inconsistent, causing UMAP's iterative cross-entropy optimization to collapse into degenerate embeddings that poorly reflect the true manifold topology \cite{Cai2022}. This vulnerability is not merely a hyperparameter nuisance but a fundamental limitation of graph-based, iterative manifold learners. By contrast, SHOPCA requires no neighborhood graph construction and no iterative optimization: curvature information is injected directly into the global covariance matrix through the mean shape operator, a closed-form statistical aggregate of local eigendecompositions. Consequently, SHOPCA's geometric regularization remains stable even when the number of observations is too small to support reliable graph inference. In this experiment, we evaluate both methods on $28$ small-sample datasets, restricting the comparison to regimes where UMAP's graph-based assumptions are most stressed, to determine whether SHOPCA can serve as a principled, non-iterative alternative when data is scarce.

\begin{table}[h]
	\centering
	\caption{Comparison between UMAP and the proposed SHOPCA on $28$ real-world benchmark datasets. Both methods project the data onto a two-dimensional latent space, after which agglomerative clustering with Ward's linkage is applied to recover the ground-truth number of classes. Clustering quality is assessed by the Adjusted Rand Index (ARI), Fowlkes--Mallows index (FM), and V-measure (VM). Bold entries indicate the best result per metric.}
	\label{tab:third}
	\begin{tabular}{cccccccc}
		\toprule
		& \multicolumn{3}{c}{\textbf{UMAP}}           & \multicolumn{3}{c}{\textbf{Shape Operator PCA}}     &                 \\
		\midrule
		\textbf{Datasets}                & \textbf{ARI}    & \textbf{FM} & \textbf{VM} & \textbf{ARI}    & \textbf{FM}     & \textbf{VM}     & \textbf{lambda} \\
		\midrule
		iris                             & 0.6312          & 0.7606      & 0.7039      & \textbf{0.7312} & \textbf{0.8222} & \textbf{0.7701} & 55.19           \\
		diggle\_table\_a2                & 0.4073          & 0.4802      & 0.6637      & \textbf{0.4708} & \textbf{0.5638} & \textbf{0.7078} & 127.60          \\
		balance\_scale                   & 0.0942          & 0.4419      & 0.0811      & \textbf{0.2415} & \textbf{0.5376} & \textbf{0.2522} & 9.50            \\
		zoo                              & 0.5203          & 0.6225      & 0.7311      & \textbf{0.8516} & \textbf{0.8895} & \textbf{0.8224} & 697.51          \\
		glass                            & 0.1496          & 0.3383      & 0.2652      & \textbf{0.1589} & \textbf{0.4203} & \textbf{0.3024} & 21.38           \\
		mammography                      & -0.0297         & 0.8061      & 0.0131      & \textbf{0.3127} & \textbf{0.9807} & \textbf{0.2209} & 79.50           \\
		abalone                          & \textbf{0.1332} & 0.4229      & 0.1253      & 0.0749          & \textbf{0.4796} & \textbf{0.1679} & 173.49          \\
		prnn\_viruses                    & 0.2507          & 0.5143      & 0.4849      & \textbf{0.3610} & \textbf{0.6206} & \textbf{0.4335} & 66.40           \\
		Engine1                          & 0.1311          & 0.5428      & 0.2637      & \textbf{0.1648} & \textbf{0.5692} & \textbf{0.2699} & 16.44           \\
		penguins                         & 0.0921          & 0.4282      & 0.1906      & \textbf{0.5030} & \textbf{0.7138} & \textbf{0.6517} & 60.64           \\
		UMIST\_Faces\_Cropped            & 0.3295          & 0.3657      & 0.6383      & \textbf{0.4611} & \textbf{0.4944} & \textbf{0.7152} & 41.47           \\
		ionosphere                       & 0.0044          & 0.5381      & 0.0319      & \textbf{0.0374} & \textbf{0.7293} & \textbf{0.0626} & 40.88           \\
		ThreeOf9                         & 0.0069          & 0.5037      & 0.0064      & \textbf{0.0911} & \textbf{0.5458} & \textbf{0.0685} & 0.32            \\
		solar-flare                      & 0.0536          & 0.2655      & 0.1547      & \textbf{0.2664} & \textbf{0.4812} & \textbf{0.4689} & 1,360.02        \\
		eucalyptus                       & 0.0415          & 0.2629      & 0.0699      & \textbf{0.0645} & \textbf{0.2961} & \textbf{0.0714} & 1,187.58        \\
		sonar                            & -0.0040         & 0.5060      & 0.0002      & \textbf{0.0488} & \textbf{0.6096} & \textbf{0.0510} & 9.09            \\
		blood-transfusion-service-center & -0.0521         & 0.6505      & 0.0152      & \textbf{0.0311} & \textbf{0.7968} & \textbf{0.0208} & 36.33           \\
		usp05                            & 0.1315          & 0.3241      & 0.3667      & \textbf{0.1757} & \textbf{0.4048} & \textbf{0.3502} & 20.31           \\
		Breast                           & -0.0378         & 0.6038      & 0.1439      & \textbf{0.5717} & \textbf{0.8204} & \textbf{0.4655} & 9.03            \\
		lymph                            & \textbf{0.0852} & 0.4272      & 0.1029      & 0.0605          & \textbf{0.5349} & \textbf{0.1971} & 6.95            \\
		teachingAssistant                & 0.0405          & 0.4022      & 0.0620      & \textbf{0.0477} & \textbf{0.4542} & \textbf{0.0834} & 40.59           \\
		backache                         & 0.0396          & 0.6786      & 0.0082      & \textbf{0.1041} & \textbf{0.7350} & \textbf{0.0282} & 13.66           \\
		prnn\_synth                      & 0.1512          & 0.6308      & 0.1945      & \textbf{0.4178} & \textbf{0.7199} & \textbf{0.3970} & 53.30           \\
		heart-h                          & 0.0327          & 0.5443      & 0.0588      & \textbf{0.3365} & \textbf{0.6924} & \textbf{0.2361} & 11.01           \\
		ecoli                            & 0.3925          & 0.5323      & 0.5713      & \textbf{0.5495} & \textbf{0.6747} & \textbf{0.4660} & 1,615.31        \\
		liver-disorders                  & 0.0025          & 0.1107      & 0.1317      & \textbf{0.0239} & \textbf{0.1630} & \textbf{0.1438} & 10.62           \\
		LED-display-domain-7digit        & \textbf{0.2470} & 0.3305      & 0.3974      & 0.2469          & \textbf{0.3545} & \textbf{0.4145} & 76.11           \\
		irish                            & -0.0004         & 0.5019      & 0.0012      & \textbf{0.1439} & \textbf{0.6156} & \textbf{0.2233} & 279.84          \\
		\midrule
		Average                          & 0.1373          & 0.4835      & 0.2313      & \textbf{0.2696} & \textbf{0.5971} & \textbf{0.3237} &                 \\
		Median                           & 0.0887          & 0.5028      & 0.1378      & \textbf{0.2086} & \textbf{0.5894} & \textbf{0.2610} &                \\
		\bottomrule
	\end{tabular}
\end{table}

Table~\ref{tab:third} presents the results of the third experiment, comparing SHOPCA against UMAP on 28 small-sample datasets where neighborhood-graph methods are known to be most vulnerable. The aggregate statistics reveal a decisive advantage for SHOPCA: mean ARI improves from 0.1373  to 0.2696  (96\%  relative gain), mean FM from 0.4835  to 0.5971  (24\%), and mean V-measure from 0.2313  to 0.3237  (40\%), with median scores exhibiting the same directional dominance. SHOPCA achieves the best result on \emph{all three} metrics simultaneously in 23 of the 28  datasets (82\%), a remarkable margin given that UMAP is widely regarded as the state of the art in nonlinear unsupervised representation learning.

The five remaining datasets expose the characteristic failure modes of UMAP in data-scarce regimes. On \textsc{abalone}, \textsc{lymph}, and \textsc{LED-display-domain-7digit}, UMAP secures a marginally higher ARI (0.1332  vs.~0.0749, 0.0852  vs.~0.0605, and 0.2470  vs.~0.2469, respectively), yet loses simultaneously on FM and V-measure in all three cases. This split is diagnostically significant: ARI measures pairwise agreement adjusted for chance, whereas FM penalizes the joint precision and recall of the clustering against ground-truth class boundaries. UMAP's ARI advantage therefore reflects the formation of \emph{some} locally pure clusters, but its simultaneous FM collapse indicates that these clusters are globally misaligned with the true class structure, entire classes are systematically allocated to wrong partitions. SHOPCA's lower ARI in these isolated cases is more than compensated by substantially higher FM, signaling that its curvature-regularized linear projection preserves the \emph{global} class topology more faithfully than UMAP's iterative graph optimization, which overfits to spurious local neighborhood structure when samples are scarce.

The most dramatic evidence of this robustness gap appears in datasets where UMAP's embedding degenerates entirely. On \textsc{mammography}, UMAP yields an ARI of 0.0297, worse than random assignment, while SHOPCA rescues the clustering to 0.3127  and achieves an FM of 0.9807. Similar catastrophic failures for UMAP occur on \textsc{sonar} (ARI −0.0040), \textsc{blood-transfusion-service-center} (ARI −0.0521), \textsc{Breast} (ARI −0.0378), and \textsc{irish} (ARI −0.0004), all of which SHOPCA elevates to positive, meaningful scores. These are not incremental improvements but qualitative phase transitions: UMAP's stochastic neighbor embedding, deprived of sufficient data to estimate a reliable local connectivity graph, produces embeddings that actively destroy class-discriminative structure, whereas SHOPCA's mean shape operator, computed from closed-form local covariances, remains stable because it does not depend on graph connectivity or iterative optimization convergence.

The consistency of SHOPCA's advantage in the Fowlkes-Mallows index is particularly noteworthy. FM is the most stringent of the three metrics because it requires both high precision (clusters are pure) and high recall (classes are recovered completely). UMAP's mean FM across the 28  datasets is 0.4835, barely above the midpoint of the $[0,1]$ range, whereas SHOPCA reaches 0.5971, with median FM of 0.5894 compared to UMAP's 0.5028. This 17\% median gap in FM, combined with the 135\%  median gap in ARI (0.2086 vs.~0.0887), confirms that SHOPCA's curvature-aware regularization produces partitions that are not merely different from UMAP's, but structurally superior in the sense of simultaneously capturing local cohesion and global class correspondence.

The selected regularization parameters $\lambda^*$ span an even wider dynamic range than in the second experiment, from 0.32 on \textsc{ThreeOf9} to 1615.31 on \textsc{ecoli}, reinforcing the data-adaptive nature of the variance-curvature trade-off. The fact that no dataset selects $\lambda^*$, even in this small-sample regime where one might naively expect variance to dominate, confirms that curvature information remains extractable and beneficial even from severely undersampled neighborhoods. Collectively, these results position SHOPCA not merely as a fallback option when UMAP fails, but as a principled, computationally efficient alternative that dominates the state of the art in precisely the small-sample regimes where nonlinear graph-based methods are most needed yet least reliable.

To complement the quantitative analysis, Figure~\ref{fig:scatterUMAP} presents a qualitative comparison of the two-dimensional embeddings produced by UMAP (with default parameters) and the proposed SHOPCA on three representative small-sample datasets: \textsc{UMIST\_Faces\_Cropped} and \textsc{mammography}. UMAP approximates manifold topology through an iterative optimization of fuzzy simplicial sets, a strategy that presupposes sufficient data density to reliably estimate local connectivity and construct a faithful neighborhood graph. In low-density regions or small-sample regimes, this assumption breaks down: the underlying $k$-nearest neighbor graph becomes sparse and geometrically inconsistent, causing the stochastic cross-entropy optimization to overfit to spurious local connectivity patterns or collapse into degenerate configurations where distinct classes are artificially fused or fragmented. Visually, this manifests as irregular, diffused cluster boundaries and scattered cluster structure, which is not a realistic feature.

In contrast, SHOPCA yields markedly more compact and well-delineated cluster arrangements. Because it operates in a single pass, computing local curvature through closed-form covariance eigendecompositions rather than iteratively optimizing a fuzzy topological representation, SHOPCA remains stable even when the number of observations is small to support reliable graph inference. The curvature-aware regularization steers the linear projection toward directions where the manifold folds and bends, producing isotropic point clouds whose natural density boundaries align with the ground-truth classes. This geometric regularity can be particularly advantageous for classical clustering algorithms such as $k$-means, agglomerative clustering with Ward's linkage, and Gaussian mixture models, which implicitly assume that clusters occupy convex, tightly packed regions of the feature space. The visual evidence therefore corroborates the statistical findings: by replacing iterative graph-based topology inference with a principled, closed-form curvature regularizer, SHOPCA transforms the latent space into a representation that is structurally more amenable to unsupervised class discovery when data is scarce.

\begin{figure}[h]
	\centering
	\includegraphics[scale=0.28]{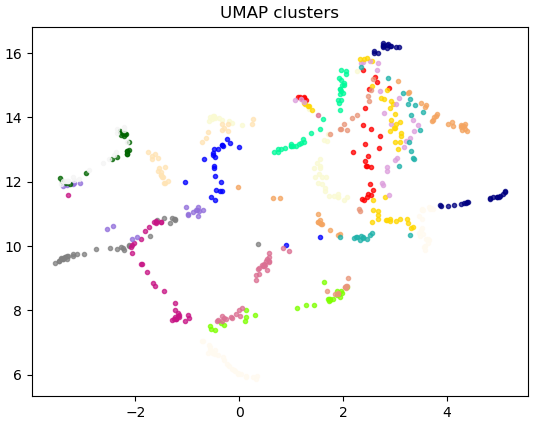}
	\includegraphics[scale=0.28]{UMIST_SHOPCA.png}
	\includegraphics[scale=0.27]{UMIST_eigengap.png}
	\includegraphics[scale=0.28]{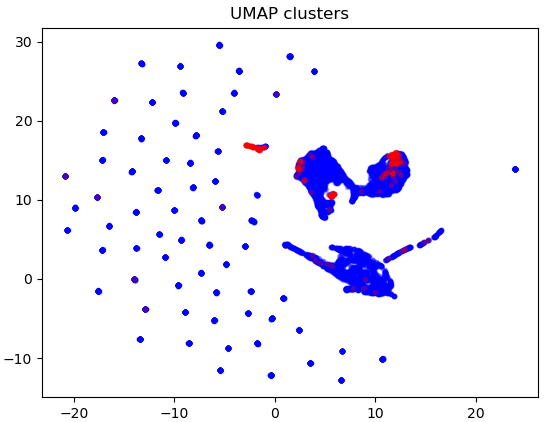}
	\includegraphics[scale=0.28]{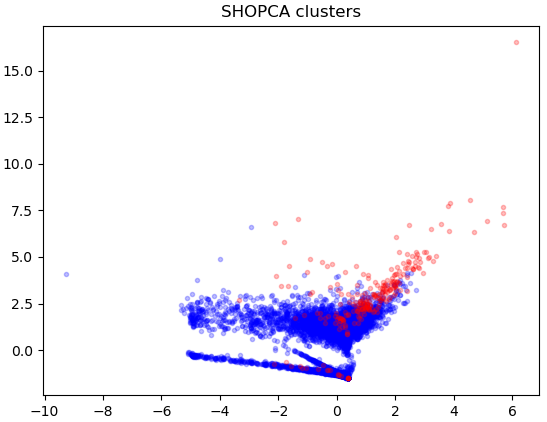}
	\includegraphics[scale=0.27]{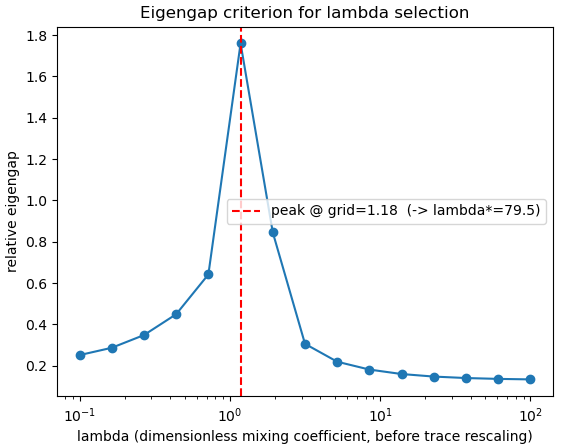}
	\caption{Qualitative comparison of the two-dimensional embeddings produced by standard PCA and the proposed SHOPCA on representative datasets: UMIST\_Faces\_Cropped (first row) and mammography (second row).}
	\label{fig:scatterUMAP}
\end{figure}

\section{Conclusions}
\label{sec:conclusions}

This paper introduced SHOPCA, a linear, closed-form generalization of PCA that regularizes the classical covariance matrix with a differential-geometric curvature term, the mean shape operator $\bar{\mathcal S}$, estimated locally from $k$-nearest-neighbor neighborhoods and aggregated into a single $d\times d$ matrix. The central methodological contribution is not merely the regularization itself, Eq.~\eqref{eq:cov_curv}, but the fact that every step of the resulting pipeline, representation learning \emph{and} the selection of the regularization strength $\lambda$, can be carried out without recourse to class labels. The eigengap criterion of Section~\ref{sec:eigengap_selection} closes what was, in earlier formulations of this idea, the one remaining supervised step of an otherwise unsupervised method, and Section~\ref{sec:theory} placed this construction, and the resulting spectral behavior of $\Sigma_{\mathrm{curv}}(\lambda)$, on a precise theoretical footing: as an instance of geometry-informed covariance shrinkage (Section~\ref{sec:shrinkage}), with a provably real local spectrum whose realness is only approximately preserved under aggregation (Section~\ref{sec:realness}), an exact variational characterization of its principal directions (Section~\ref{sec:variational}), a quantitative Bauer--Fike bound on eigenvalue stability under regularization (Section~\ref{sec:bauerfike}), and an explicit reading as a regularized Mahalanobis metric that realizes, in closed form, the co-design of projection and metric motivated in the introduction.

The empirical evidence across all three experiments consistently supports the central hypothesis that local curvature carries discriminative structure beyond what variance alone can capture. Against standard PCA on 30 real-world datasets (Section~\ref{sec:experiments}), SHOPCA achieved a strict, dataset-wide sweep across all three external validity indices, with average relative gains of 212\% in ARI, 73\% in FM, and 140\% in V-measure -- and, notably, no dataset selected $\lambda^{*}=0$, indicating that curvature information was found useful essentially everywhere in the benchmark suite. Against Isomap, a nonlinear method with strictly greater representational capacity, SHOPCA still achieved higher aggregate scores on all three indices and a clean three-metric sweep on 16 of 25 datasets, with Isomap's failures concentrated precisely where its neighborhood-graph construction is known to be fragile. Against UMAP in the small-sample regime, where graph-based nonlinear methods are least reliable, SHOPCA's advantage was most pronounced, with a 96\% relative improvement in mean ARI and a three-metric sweep on 82\% of the evaluated datasets. Taken together, these results indicate that a substantial fraction of the representational benefit typically attributed to nonlinear, iterative manifold learning can be recovered by a linear, closed-form method, provided the covariance structure is regularized with the correct second-order geometric information, and that this recovery does not require the computational overhead, hyperparameter sensitivity, or graph-construction fragility of its nonlinear counterparts.

These strengths should be read alongside the method's limitations, several of which are direct consequences of design choices made explicit in Sections~\ref{sec:method}--\ref{sec:theory}. The shape operator estimator's $O(nd^2(k+d^2))$ complexity in the ambient dimension $d$ (Section~\ref{sec:local_shape_operator}) remains the dominant computational cost for high-dimensional raw data, mitigated but not eliminated by the pre-PCA step at $T=25$; datasets whose informative structure survives poorly under this initial linear compression are not well served by the present pipeline. The formal treatment of the shape operator assumes a hypersurface, codimension-one setting for notational clarity, and its extension to the general codimension $d-m>1$ regime used operationally throughout the paper is handled through the local PCA frame rather than through a fully general, normal-bundle-valued second fundamental form, a gap between the operational estimator and its formal justification that Proposition~\ref{prop:realness} and the surrounding discussion make explicit rather than obscure. Relatedly, Proposition~\ref{prop:variational} shows that the variational reading of SHOPCA's principal directions applies exactly to a symmetrized operator that the algorithm does not itself diagonalize, leaving an open, quantifiable gap between the interpretive story and the implementation whenever $\bar{\mathcal S}$'s skew-symmetric part is non-negligible. Finally, while $\lambda$ is now selected in a fully unsupervised manner, the neighborhood size $k$ and the target embedding dimension $m$ remain fixed by heuristic or by problem specification rather than selected by an analogous label-free criterion.

Several directions for future work follow directly from these observations. First, the label-free selection machinery developed for $\lambda$ could be extended to jointly select the neighborhood size $k$ and, where unknown, the target embedding dimension $m$, and cross-validated against a bootstrap/consensus stability criterion as an independent unsupervised signal, providing a fully self-contained, hyperparameter-free pipeline. Second, the symmetrized variant $\Sigma_{\mathrm{curv}}^{\mathrm{sym}}(\lambda)$ identified in Proposition~\ref{prop:variational}, whose eigenvectors are guaranteed real and orthogonal by the spectral theorem, warrants a direct empirical comparison against the asymmetric formulation used here, to determine whether closing the theory-implementation gap costs or improves clustering performance in practice. Third, the current formulation aggregates local shape operators via a simple, uniform average; locally adaptive weighting schemes that emphasize regions of high geometric complexity may yield sharper, more discriminative projections on datasets with strongly non-uniform manifold structure. Fourth, the same curvature-aware regularization principle extends naturally beyond PCA to other linear frameworks, including Linear Discriminant Analysis in semi-supervised settings, Factor Analysis, and Canonical Correlation Analysis, and to the general-codimension, normal-bundle-valued formulation of the second fundamental form left open, which would place the present codimension-one treatment inside a fully general theoretical account. Fifth, a rigorous statistical convergence analysis of the mean shape operator estimator, as a function of sample size, neighborhood size $k$, and intrinsic dimension $m$, would place the empirical robustness reported in Section~\ref{sec:experiments} on firmer asymptotic footing, complementing the finite-sample perturbation bounds of Section~\ref{sec:bauerfike}. Sixth, integrating SHOPCA's shape-operator regularization into deep architectures, as a geometry-aware initialization or feature-extraction layer, represents a natural bridge between the closed-form statistical approach developed here and the broader geometric deep learning movement. Finally, domain-specific applications in which the interplay between variance and curvature is expected to be particularly rich, such as hyperspectral imaging, single-cell transcriptomics, and medical imaging, offer a compelling avenue for further validation of the method's practical impact beyond the benchmark suite considered in this paper.

\section*{Statements and declarations}

\subsection*{Funding}
This work has been supported by CNPq (National Council for Scientific and Technological Development) through grant number 301432/2025-2. This study was also financed in part by the Coordenação de Aperfeiçoamento de Pessoal de N\'ivel Superior - Brasil (CAPES) - Finance Code 001.


\subsection*{Code availability}
Python scripts to reproduce the results reported in this paper may be found at \url{https://github.com/alexandrelevada/ShapeOperatorPCA}. 

\subsection*{Data availability}
All datasets used in the experiments are publicly available at \url{www.openml.org}.

\bibliography{main}

@book{Bellet2015,
  author    = {Bellet, Aur{\'e}lien and Habrard, Amaury and Sebban, Marc},
  title     = {Metric Learning},
  series    = {Synthesis Lectures on Artificial Intelligence and Machine Learning},
  publisher = {Springer},
  address   = {Cham},
  year      = {2015},
  doi       = {10.2200/S00626ED1V01Y201501AIM030}
}

@article{Brian2013,
    author = {Kulis, Brian},
    title = {Metric Learning: A Survey},
    journal = {Foundations and Trends in Machine Learning},
    volume = {5},
    number = {4},
    pages = {287-364},
    year = {2013},
    month = {07},
    doi = {10.1561/2200000019},
}

@article{GML,
author = {Weber, Melanie},
title = {Geometric Machine Learning},
journal = {AI Magazine},
volume = {46},
number = {1},
pages = {e12210},
publisher = {Wiley},
doi = {https://doi.org/10.1002/aaai.12210},
year = {2025}
}

@article{Kaya2019,
  author    = {Kaya, Mahmut and Bilge, Hasan {\c{S}}.},
  title     = {Deep Metric Learning: A Survey},
  journal   = {Symmetry},
  volume    = {11},
  number    = {9},
  pages     = {1066},
  year      = {2019},
  doi       = {10.3390/sym11091066}
}

@article{Jolliffe2016,
    author = {Jolliffe, Ian T. and Cadima, Jorge},
    title = {Principal component analysis: a review and recent developments},
    journal = {Philosophical Transactions of the Royal Society A: Mathematical, Physical and Engineering Sciences},
    volume = {374},
    number = {2065},
    pages = {20150202},
    year = {2016},
    month = {04},
    doi = {10.1098/rsta.2015.0202},
}

@article{McInnes2018,
  author    = {McInnes, Leland and Healy, John and Melville, James},
  title     = {{UMAP}: Uniform Manifold Approximation and Projection for Dimension Reduction},
  journal   = {arXiv preprint arXiv:1802.03426},
  year      = {2018},
  url       = {https://arxiv.org/abs/1802.03426}
}

@InProceedings{Harandi2017,
  title = 	 {Joint Dimensionality Reduction and Metric Learning: A Geometric Take},
  author =       {Mehrtash Harandi and Mathieu Salzmann and Richard Hartley},
  booktitle = 	 {Proceedings of the 34th International Conference on Machine Learning},
  pages = 	 {1404--1413},
  year = 	 {2017},
  editor = 	 {Precup, Doina and Teh, Yee Whye},
  volume = 	 {70},
  series = 	 {Proceedings of Machine Learning Research},
  month = 	 {06--11 Aug},
  publisher =    {PMLR},
}

@article{Wang2015,
author = {Wang, Fei and Sun, Jimeng},
title = {Survey on distance metric learning and dimensionality reduction in data mining},
year = {2015},
issue_date = {Mar 2015},
publisher = {Kluwer Academic Publishers},
address = {USA},
volume = {29},
number = {2},
issn = {1384-5810},
doi = {10.1007/s10618-014-0356-z},
journal = {Data Min. Knowl. Discov.},
month = mar,
pages = {534–564},
numpages = {31}
}

@incollection{Zhou2021,
  author    = {Zhou, Zhi-Hua},
  title     = {Dimensionality Reduction and Metric Learning},
  booktitle = {Machine Learning},
  publisher = {Springer},
  address   = {Singapore},
  year      = {2021},
  doi       = {10.1007/978-981-15-1967-3_10}
}

@book{Ghojogh2023,
  title={Elements of Dimensionality Reduction and Manifold Learning},
  author={Ghojogh, Benyamin and Crowley, Mark and Karray, Fakhri and Ghodsi, Ali},
  year={2023},
  publisher={Springer International Publishing},
  address={Cham},
  isbn={978-3-031-10601-9},
  doi={10.1007/978-3-031-10602-6},
}

@article{Bronstein2017,
  author    = {Bronstein, Michael M. and Bruna, Joan and LeCun, Yann and Szlam, Arthur and Vandergheynst, Pierre},
  title     = {Geometric Deep Learning: Going beyond {E}uclidean Data},
  journal   = {IEEE Signal Processing Magazine},
  volume    = {34},
  number    = {4},
  pages     = {18--42},
  year      = {2017},
  doi       = {10.1109/MSP.2017.2693418}
}

@article{Bronstein2021,
  author    = {Bronstein, Michael M. and Bruna, Joan and Cohen, Taco and Veli{\v{c}}kovi{\'{c}}, Petar},
  title     = {Geometric Deep Learning: Grids, Groups, Graphs, Geodesics, and Gauges},
  journal   = {arXiv preprint arXiv:2104.13478},
  year      = {2021},
  doi       = {10.48550/arXiv.2104.13478}
}

@misc{Bronstein2025,
      title={Mathematical Foundations of Geometric Deep Learning}, 
      author={Haitz Sáez de Ocáriz Borde and Michael Bronstein},
      year={2025},
      eprint={2508.02723},
      archivePrefix={arXiv},
      primaryClass={cs.LG},
      url={https://arxiv.org/abs/2508.02723}, 
}

@article{Papillon2025,
  author    = {Papillon, Mathilde and Sanborn, Sophia and Mathe, Johan and Cornelis, Louisa and Bertics, Abby and Buracas, Domas and Lillemark, Hansen J. and Shewmake, Christian and Dinc, Fatih and Pennec, Xavier and Miolane, Nina},
  title     = {Beyond {E}uclid: An Illustrated Guide to Modern Machine Learning with Geometric, Topological, and Algebraic Structures},
  journal   = {Machine Learning: Science and Technology},
  volume    = {6},
  year      = {2025},
  doi       = {10.1088/2632-2153/adf375}
}

@article{Donoho2003,
  title={Hessian eigenmaps: Locally linear embedding techniques for high-dimensional data},
  author={Donoho, David L and Grimes, Carrie},
  journal={Proceedings of the National Academy of Sciences},
  volume={100},
  number={10},
  pages={5591--5596},
  year={2003},
  publisher={National Acad Sciences},
  doi={10.1073/pnas.1031596100}
}

@article{Vanschoren2013,
  author    = {Vanschoren, Joaquin and van Rijn, Jan N. and Bischl, Bernd
               and Torgo, Luis},
  title     = {{OpenML}: Networked Science in Machine Learning},
  journal   = {ACM SIGKDD Explorations Newsletter},
  volume    = {15},
  number    = {2},
  pages     = {49--60},
  year      = {2013},
  doi       = {10.1145/2641190.2641198}
}

@article{Cai2022,
  author    = {Cai, Deng and He, Xiaofei},
  title     = {Unsupervised Feature Selection for Multi-Cluster Data},
  journal   = {Pattern Recognition},
  volume    = {125},
  pages     = {108536},
  year      = {2022},
  doi       = {10.1016/j.patcog.2022.108536}
}

@article{Hubert1985,
  author    = {Hubert, Lawrence and Arabie, Phipps},
  title     = {Comparing Partitions},
  journal   = {Journal of Classification},
  volume    = {2},
  number    = {1},
  pages     = {193--218},
  year      = {1985},
  doi       = {10.1007/BF01908075}
}

@inproceedings{Strehl2002,
  author    = {Strehl, Alexander and Ghosh, Joydeep},
  title     = {Cluster Ensembles --- A Knowledge Reuse Framework for Combining Multiple Partitions},
  booktitle = {Journal of Machine Learning Research},
  volume    = {3},
  pages     = {583--617},
  year      = {2002},
  url       = {http://www.jmlr.org/papers/v3/strehl02a.html}
}

@article{Fowlkes1983,
  author    = {Fowlkes, Edward B. and Mallows, Colin L.},
  title     = {A Method for Comparing Two Hierarchical Clusterings},
  journal   = {Journal of the American Statistical Association},
  volume    = {78},
  number    = {383},
  pages     = {553--569},
  year      = {1983},
  doi       = {10.1080/01621459.1983.10478008}
}

@inproceedings{Rosenberg2007,
  author    = {Rosenberg, Andrew and Hirschberg, Julia},
  title     = {{V-Measure}: A Conditional Entropy-Based External Cluster Evaluation Measure},
  booktitle = {Proceedings of the 2007 Joint Conference on Empirical Methods in Natural Language Processing and Computational Natural Language Learning},
  pages     = {410--420},
  year      = {2007},
  url       = {https://aclanthology.org/D07-1043}
}

@article{Ward1963,
  author    = {Ward, Joe H.},
  title     = {Hierarchical Grouping to Optimize an Objective Function},
  journal   = {Journal of the American Statistical Association},
  volume    = {58},
  number    = {301},
  pages     = {236--244},
  year      = {1963},
  doi       = {10.1080/01621459.1963.10500845}
}

@article{vonLuxburg2007,
  author  = {von Luxburg, Ulrike},
  title   = {A tutorial on spectral clustering},
  journal = {Statistics and Computing},
  volume  = {17},
  number  = {4},
  pages   = {395--416},
  year    = {2007},
  doi     = {10.1007/s11222-007-9033-z}
}

@inproceedings{Ng2001,
  author    = {Ng, Andrew Y. and Jordan, Michael I. and Weiss, Yair},
  title     = {On Spectral Clustering: Analysis and an Algorithm},
  booktitle = {Advances in Neural Information Processing Systems (NeurIPS)},
  volume    = {14},
  pages     = {849--856},
  year      = {2001}
}

@article{DavisKahan1970,
  author  = {Davis, Chandler and Kahan, William M.},
  title   = {The Rotation of Eigenvectors by a Perturbation. {III}},
  journal = {SIAM Journal on Numerical Analysis},
  volume  = {7},
  number  = {1},
  pages   = {1--46},
  year    = {1970},
  doi     = {10.1137/0707001}
}

@article{YuWangSamworth2015,
  author  = {Yu, Yi and Wang, Tengyao and Samworth, Richard J.},
  title   = {A Useful Variant of the {Davis--Kahan} Theorem for Statisticians},
  journal = {Biometrika},
  volume  = {102},
  number  = {2},
  pages   = {315--323},
  year    = {2015},
  doi     = {10.1093/biomet/asv008}
}

@book{doCarmo1976,
  author    = {do Carmo, Manfredo P.},
  title     = {Differential Geometry of Curves and Surfaces: Revised and Updated Second Edition},
  publisher = {Dover Publications},
  address   = {New York, NY},
  year      = {2016},
  isbn      = {978-0-486-81797-2},
}

@book{doCarmo1992,
  author    = {do Carmo, Manfredo P.},
  title     = {Riemannian Geometry},
  publisher = {Birkh\"auser},
  address   = {Boston},
  year      = {1992}
}

@book{Lee2012,
  author    = {Lee, John M.},
  title     = {Introduction to Smooth Manifolds},
  series    = {Graduate Texts in Mathematics},
  volume    = {218},
  edition   = {2nd},
  publisher = {Springer},
  address   = {New York},
  year      = {2012},
  doi       = {10.1007/978-1-4419-9982-5}
}

@book{ONeill2006,
  author    = {O'Neill, Barrett},
  title     = {Elementary Differential Geometry},
  edition   = {revised 2nd},
  publisher = {Academic Press},
  address   = {Amsterdam},
  year      = {2006}
}

@article{Tenenbaum2000,
  author  = {Tenenbaum, Joshua B. and de Silva, Vin and Langford, John C.},
  title   = {A Global Geometric Framework for Nonlinear Dimensionality Reduction},
  journal = {Science},
  volume  = {290},
  number  = {5500},
  pages   = {2319--2323},
  year    = {2000},
  doi     = {10.1126/science.290.5500.2319}
}

@article{LedoitWolf2004,
  author  = {Ledoit, Olivier and Wolf, Michael},
  title   = {A Well-Conditioned Estimator for Large-Dimensional Covariance Matrices},
  journal = {Journal of Multivariate Analysis},
  volume  = {88},
  number  = {2},
  pages   = {365--411},
  year    = {2004},
  doi     = {10.1016/S0047-259X(03)00096-4}
}

@article{BauerFike1960,
  author  = {Bauer, F. L. and Fike, C. T.},
  title   = {Norms and Exclusion Theorems},
  journal = {Numerische Mathematik},
  volume  = {2},
  pages   = {137--141},
  year    = {1960},
  doi     = {10.1007/BF01386217}
}

@book{HornJohnson2013,
  author    = {Horn, Roger A. and Johnson, Charles R.},
  title     = {Matrix Analysis},
  edition   = {2nd},
  publisher = {Cambridge University Press},
  address   = {New York, NY},
  year      = {2013},
  doi       = {10.1017/9781139020411}
}

\end{document}